\newtheorem{theorem}{Theorem}
\newtheorem{lemma}{Lemma}
\newtheorem{Lem}{Lemma}
\newtheorem{definition}{Definition}
\newtheorem{remark}{Remark}
\newcommand{\R}{{\mathbb R}}
\newcommand{\Z}{{\mathbb Z}}
\newcommand{\E}[1]{{\mathbb E}\left [#1\right]}
\renewcommand{\P}{{\mathbb P}}
\newcommand{\ep}{\varepsilon}
\newcommand{\gives}{\ensuremath{\rightarrow}}
\newcommand{\setst}[2]{\ensuremath{ \left\{ #1\,\right|\left.\,#2 \right\}}}
\newcommand{\abs}[1]{\ensuremath{\left| #1 \right|}}
\newcommand{\lr}[1]{\ensuremath{\left(#1 \right)}}
\newcommand{\norm}[1]{\left\lVert#1\right\rVert}
\newcommand{\inprod}[2]{\ensuremath{\left\langle#1,#2\right\rangle}}
\newcommand{\twiddle}[1]{\ensuremath{\widetilde{#1}}}
\newcommand{\dell}{\ensuremath{\partial}}
\newcommand{\set}[1]{\ensuremath{\{#1\}}}
\def\XXint#1#2#3{{\setbox0=\hbox{$#1{#2#3}{\int}$} \vcenter{\hbox{$#2#3$}}\kern-.5\wd0}}
\DeclareMathOperator{\Relu}{ReLU}
\DeclareMathOperator{\act}{act}
\DeclareMathOperator{\Act}{Act}
\DeclareMathOperator{\Var}{Var}
\title{Which Neural Net Architectures Give Rise to Exploding and
  Vanishing Gradients?}
\author{Boris Hanin}
\author{
  Boris Hanin\\
  Department of Mathematics\\
  Texas A\& M University\\
  College Station, TX, USA\\
  \texttt{bhanin@math.tamu.edu} 
}
\begin{document}
\maketitle 

\begin{abstract}
We give a rigorous analysis of the statistical behavior of gradients in a randomly initialized fully connected network $\mathcal N$ with $\Relu$ activations. Our results show that the empirical variance of the squares of the entries in the input-output Jacobian of $\mathcal N$ is exponential in a simple architecture-dependent constant $\beta,$ given by the sum of the reciprocals of the hidden layer widths. When $\beta$ is large, the gradients computed by $\mathcal N$ at initialization vary wildly. Our approach complements the mean field theory analysis of random networks. From this point of view, we rigorously compute finite width corrections to the statistics of gradients at the edge of chaos. 
\end{abstract}  

\section{Introduction}
A fundamental obstacle in training deep neural nets using gradient based optimization is the exploding and vanishing gradient problem (EVGP), which has attracted much attention (e.g.  \cite{bengio1994learning, hochreiter2001gradient,mishkin2015all,xie2017all,pennington2017resurrecting, pennington2018emergence}) after first being studied by Hochreiter \cite{hochreiter1991untersuchungen}. The EVGP occurs when the derivative of the loss in the SGD update 
\begin{equation}\label{E:param-update}
W\qquad\longleftarrow\qquad W~~-~~\lambda~ \frac{\dell\mathcal L}{\dell W},
\end{equation}
is very large for some trainable parameters $W$ and very small for others:
\[\abs{\frac{\dell\mathcal L}{\dell W}}~~\approx~~  0 ~~\text{or}~~ \infty.\]
This makes the increment in \eqref{E:param-update} either too small to be meaningful or too large to be precise. In practice, a number of ways of overcoming the EVGP have been proposed (see e.g. \cite{SchmidtHuber}). Let us mention three general approaches: (i) using architectures such as LSTMs \cite{hochreiter1997long}, highway networks \cite{srivastava2015highway}, or ResNets \cite{he2016deep} that are designed specifically to control gradients; (ii) precisely initializing weights (e.g. i.i.d. with properly chosen variances \cite{mishkin2015all,he2015delving} or using orthogonal weight matrices \cite{arjovsky2016unitary, henaff2016recurrent}); (iii) choosing non-linearities that that tend to compute numerically stable gradients or activations at initialization \cite{klambauer2017self}. 

A number of articles (e.g. \cite{poole2016exponential,schoenholz2016deep,pennington2017resurrecting, pennington2018emergence}) use mean field theory to show that even vanilla fully connected architectures can avoid the EVGP in the limit of \textit{infinitely wide} hidden layers. In this article, we continue this line of investigation. We focus specifically on fully connected $\Relu$ nets, and give a rigorous answer to the question of which combinations of depths $d$ and hidden layer widths $n_j$ give $\Relu$ nets that suffer from the EVGP at initialization. In particular, we avoid approach (iii) to the EVGP by setting once and for all the activations in $\mathcal N$ to be $\Relu$ and that we study approach (ii) in the limited sense that we consider only initializations in which weights and biases are independent (and properly scaled as in Definition \ref{D:rand-relu}) but do not investigate other initialization strategies. Instead, we focus on rigorously understanding the effects of finite depth and width on gradients in randomly initialized networks. The main contributions of this work are:\\
\begin{enumerate}
\item \textbf{We derive new exact formulas for the joint even moments of the entries of the input-output Jacobian in a fully connected $\Relu$ net with random weights and biases.} These formulas hold at finite depth and width (see Theorem \ref{T:path-moments}). \\

\item \textbf{We prove that the empirical variance of gradients in a fully connected $\Relu$ net is exponential in the sum of the reciprocals of the hidden layer widths.} This suggests that when this sum of reciprocals is too large, early training dynamics are very slow and it may take many epochs to achieve better-than-chance performance (see Figure \ref{fig:fixed_width}).\\

\item We prove that, so long as weights and biases are initialized independently with the correct variance scaling (see Definition  \ref{D:rand-relu}), \textbf{whether the EVGP occurs} (in the precise sense explained in \S \ref{S:EVGP}) \textbf{in fully connected $\Relu$ nets is a function only of the architecture and not the distributions from which the weights and biases are drawn}.\\

\end{enumerate}

\begin{figure}[htbp]
\begin{center}
%(a)\includegraphics[scale=0.3]{2av2.png}
\includegraphics[scale=0.5]{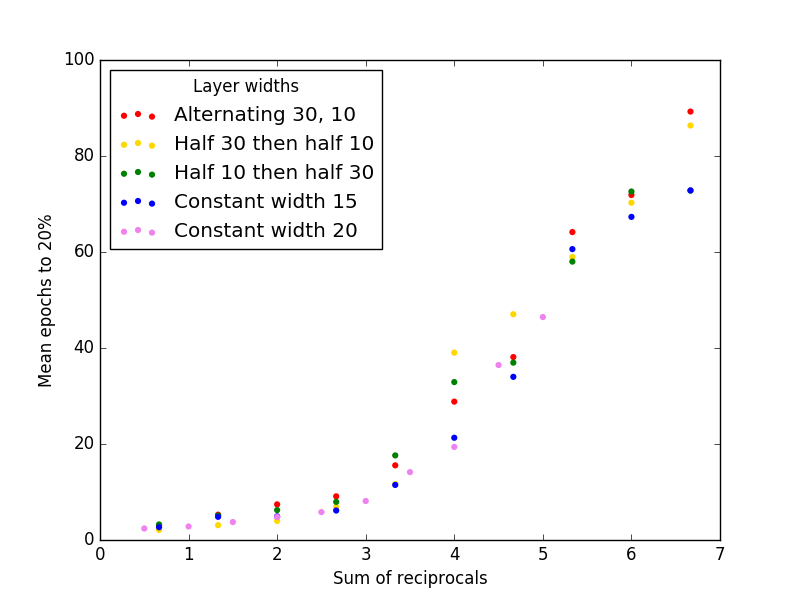}
\caption{Comparison of early training dynamics on vectorized MNIST for
  fully connected $\Relu$ nets with various architectures. Plot shows the mean number of epochs (over 100 independent training runs) that a given architecture takes to reach $20\%$ accuracy as a function of the sum of reciprocals of hidden layer widths. (Figure reprinted with permission from \cite{hanin2018start} with caption modified).}
\label{fig:fixed_width}
\end{center}
\end{figure}
\subsection{Practical Implications} The second of the listed contributions has several concrete consequences for architecture selection and for understanding initial training dynamics in $\Relu$ nets. Specifically, our main results, Theorems \ref{T:slope}-\ref{T:path-moments}, prove that the EVGP will occur in a $\Relu$ net $\mathcal N$ (in either the annealed or the quenched sense described in \S \ref{S:EVGP}) if and only if a single scalar parameter, the sum
\[\beta=\sum_{j=1}^{d-1}\frac{1}{n_j}\]
of reciprocals of the hidden layer widths of $\mathcal N,$ is large. Here $n_j$ denotes the width of the $j^{th}$ hidden layer, and we prove in Theorem \ref{T:slope} that the variance of entries in the input-output Jacobian of $\mathcal N$ is exponential in $\beta.$ Implications for architecture selection then follow from special cases of the power-mean inequality:
\begin{equation}\label{E:power-mean}
\lr{\frac{1}{d-1}\sum_{j=1}^{d-1} \frac{1}{n_j}}^{-1}~~\leq~~ \frac{1}{d-1}\sum_{j=1}^{d-1}n_j~~\leq~~ \lr{\frac{1}{d-1}\sum_{j=1}^{d-1}n_j^2}^{1/2},
\end{equation}
in which equality is achieved if and only if $n_j$ are all equal. We interpret the leftmost inequality as follows. Fix $d$ and a total budget $\sum_j n_j$ of hidden layer neurons. Theorems \ref{T:slope} and \ref{T:quenched} say that to avoid the EVGP in both the quenched and annealed senses, one should minimize $\beta$ and hence make the leftmost expression in \eqref{E:power-mean} as large as possible. This occurs precisely when $n_j$ are all equal. Fix instead $d$ and a budget of trainable parameters, $\sum_j n_{j}( n_{j-1}+1),$ which is close to $\sum_j n_j^2$ if the $n_j$'s don't fluctuate too much. Again using \eqref{E:power-mean}, we find that from the point of view of avoiding the EVGP, it is advantageous to take the $n_j$'s to be equal.

In short, our theoretical results (Theorems \ref{T:slope} and \ref{T:quenched}) show that if $\beta$ is large then, at initialization, $\mathcal N$ will compute gradients that fluctuate wildly, intuitively leading to slow initial training dynamics. This heuristic is corroborated by an experiment from \cite{hanin2018start} about the start of training on MNIST for fully connected neural nets with varying depths and hidden layer widths (the parameter $\beta$ appeared in \cite{hanin2018start} in a different context). Figure \ref{fig:fixed_width} shows that $\beta$ is a good summary statistic for predicting how quickly deep networks will start to train. 

We conclude the introduction by mentioning what we see as the principal weaknesses of the present work. First, our analysis holds only for $\Relu$ activations and assumes that all non-zero weights are independent and zero centered. Therefore, our conclusions do not directly carry over to convolutional, residual, and recurrent networks. Second, our results yield information about the fluctuations of the entries $Z_{p,q}$ of the input-output Jacobian $ J_{\mathcal N}$ at any fixed input to $\mathcal N.$ It would be interesting to have information about the joint distribution of the $Z_{p,q}$'s with inputs ranging over an entire dataset. Third, our techniques do not directly extend to initializations such as orthogonal weight matrices. We hope to address these issues in the future and, specifically, believe that the qualitative results of this article will generalize to convolutional networks in which the number of channels grows with the layer number.

\section{Relation to Prior Work} 
To provide some context for our results, we contrast both our approach and contributions with the recent work \cite{pennington2017resurrecting, pennington2018emergence}. These articles consider two senses in which a fully connected neural net $\mathcal N$ with random weights and biases can avoid the EVGP. The first is that the average singular value of the input-output Jacobian $J_{\mathcal N}$ remains approximately $1$, while the second, termed dynamical isometry, requires that all the singular values of $J_{\mathcal N}$ are approximately $1$. The authors of \cite{pennington2017resurrecting, pennington2018emergence} study the full distribution of the singular values of the Jacobian $J_{\mathcal N}$ first in the infinite width limit $n\gives \infty$ and then in the infinite depth limit $d\gives \infty.$ 

Let us emphasize two particularly attractive features of \cite{pennington2017resurrecting, pennington2018emergence}. First, neither the initialization nor the non-linearity in the neural nets $\mathcal N$ is assumed to be fixed, allowing the authors to consider solutions of types (ii) and (iii) above to the EVGP. The techniques used in these articles are also rather general, and point to the emergence of universality classes for singular values of the Jacobian of deep neural nets at initialization. Second, the results in these articles access the full distribution of singular values for the Jacobian $J_{\mathcal N}$, providing significantly more refined information than simply controlling the mean singular value. 

The neural nets considered in \cite{pennington2017resurrecting, pennington2018emergence} are essentially assumed to be infinitely wide, however. This raises the question of whether there is any finite width at which the behavior of a randomly initialized network will resemble the infinite width regime, and moreover, if such a width exists, how wide is wide enough? In this work we give rigorous answers to such questions by quantifying finite width effects, leaving aside questions about both different choices of non-linearity and about good initializations that go beyond independent weights. 

Instead of taking the singular value definition of the EVGP as in \cite{pennington2017resurrecting, pennington2018emergence}, we propose two non-spectral formulations of the EVGP, which we term annealed and quenched. Their precise definitions are given in \S\ref{S:annealed} and \S \ref{S:quenched}, and we provide in \S \ref{S:EVGP-comp} a discussion of the relation between the different senses in which the EVGP can occur.

Theorem \ref{T:slope} below implies, in the infinite width limit, that all $\Relu$ nets avoid the EVGP in both the quenched and annealed sense. Hence, our definition of the EVGP (see \S \ref{S:annealed} and \S \ref{S:quenched}) is weaker than the dynamical isometry condition from \cite{pennington2017resurrecting,pennington2018emergence}. But, as explained in \S \ref{S:EVGP-comp}, it is stronger the condition that the average singular value equal $1.$ Both the quenched and annealed versions of the EVGP concern the fluctuations of the partial derivatives 
\begin{equation}\label{E:zpq-def}
Z_{p,q}:=\frac{\dell \lr{f_{\mathcal N}}_q}{\dell \Act_p^{(0)}}
\end{equation}
of the $q^{th}$ component of the function $f_{\mathcal N}$ computed by $\mathcal N$ with respect to the $p^{th}$ component of its input ($\Act^{(0)}$ is an input vector - see \eqref{E:act-def}). The stronger, quenched version of the EVGP concerns the \textit{empirical variance} of the squares of all the different $Z_{p,q}:$
\begin{equation}\label{E:empvar-def}
\widehat{\Var}\left[Z^2\right]:=\frac{1}{M}\sum_{m=1}^M Z_{p_m,q_m}^4 - \lr{\frac{1}{M}\sum_{m=1}^M Z_{p_m,q_m}^2}^2,\qquad M= n_0n_d.
\end{equation}
Here, $n_0$ is the input dimension to $\mathcal N$, $n_d$ is the output dimension, and the index $m$ runs over all $n_0n_d$ possible input-output neuron pairs $(p_m,q_m).$ Intuitively, since we will show in Theorem \ref{T:slope} that
\[\E{Z_{p,q}^2} = \Theta(1),\]
independently of the depth, having a large mean for $\widehat{\Var}\left[Z^2\right]$ means that for a typical realization of the weights and biases in $\mathcal N$, the derivatives of $f_{\mathcal N}$ with respect to different trainable parameters will vary over several orders of magnitude, leading to inefficient SGD updates \eqref{E:param-update} for any fixed learning rate $\lambda$ (see \S \ref{S:EVGP-comp} - \S \ref{S:quenched}).

To avoid the EVGP (in the annealed or quenched senses described below) in deep feed-forward networks with $\Relu$ activations, our results advise letting the widths of hidden layers grow as a function of the depth. In fact, as the width of a given hidden layer tends to infinity, the input to the next hidden layer can viewed as a Gaussian process and can be understood using mean field theory (in which case one first considers the infinite width limit and only then the infinite depth limit). This point of view was taken in several interesting papers (e.g. \cite{poole2016exponential,schoenholz2016deep, pennington2017resurrecting, pennington2018emergence} and references therein), which analyze the dynamics of signal propagation through such deep nets. In their notation, the fan-in normalization (condition (ii) in Definition \ref{D:rand-relu}) guarantees that we've initialized our neural nets at the edge of chaos (see e.g. around (7) in \cite{poole2016exponential} and (5) in \cite{schoenholz2016deep}). Indeed, writing $\mu^{(j)}$ for the weight distribution at layer $j$ and using our normalization $\mathrm{Var}[\mu^{(j)}]=2/n_{j-1}$, the order parameter $\chi_1$ from \cite{poole2016exponential, schoenholz2016deep} becomes 
\begin{align*}
\chi_1&=n_{j-1}\cdot \mathrm{Var}[\mu^{(j)}] \int_{\R} e^{-z^2/2} \lr{\phi'(\sqrt{q^*}z)}^2\frac{dz}{\sqrt{2\pi}}=1,
\end{align*}
since $\phi=\Relu$, making $\phi'(z)$ the indicator function ${\bf 1}_{[0,\infty)}(z)$ and the value of $\phi'(\sqrt{q^*}z)$ independent of the asymptotic length $q^*$ for activations. The condition $\chi_1=1$ defines the edge of chaos regime. This gives a heuristic explanation for why the nets considered in the present article cannot have just one of vanishing and exploding gradients. It also allows us to interpret our results as a rigorous computation for $\Relu$ nets of the $1/n_j$ corrections at the edge of chaos.

In addition to the mean field theory papers, we mention the article \cite{schoenholz2017correspondence}. It does not deal directly with gradients, but it does treat the finite width corrections to the statistical distribution of pre-activations in a feed-forward network with Gaussian initialized weights and biases. A nice aspect of this work is that the results give the joint distribution not only over all the neurons but also over any number of inputs to the network.  In a similar vein, we bring to the reader's attention \cite{balduzzi2017shattered}, which gives interesting heuristic computations about the structure of correlations between gradients corresponding to different inputs in both fully connected and residual $\Relu$ nets.

%\subsection{Spectral vs. Entry-wise EVGP and Relation to \cite{hanin2018start}} 

\section{Defining the EVGP for Feed-Forward Networks}\label{S:EVGP}
We now explain in exactly what sense we study the EVGP and contrast our definition, which depends on the behavior of the \textit{entries} of the input-output Jacobian $J_{\mathcal N}$, with the more usual definition, which depends on the behavior of its singular values (see \S \ref{S:EVGP-comp}). To do this, consider a feed-forward fully connected depth $d$ network $\mathcal N$ with hidden layer widths $n_0,\ldots, n_d,$ and fix an input $\Act^{_{(0)}}\in \R^{n_0}$. We denote by $\Act^{_{(j)}}$ the corresponding vector of activations at layer $j$ (see \eqref{E:act-def}). The exploding and vanishing gradient problem can be roughly stated as follows:
\begin{equation}\label{E:EVGP-def}
\mathrm{Exploding/Vanishing~ Gradients}\qquad \longleftrightarrow \qquad Z_{p,q} \text{ has large fluctuations},
\end{equation}
where $Z_{p,q}$ the entries of the Jacobian $J_{\mathcal N}$ (see \eqref{E:zpq-def}). A common way to formalize this statement is to interpret ``$Z_{p,q}$ has large fluctuations'' to mean that the Jacobian $J_{\mathcal N}$ of the function computed by $\mathcal N$ has both very large and very small singular values \cite{bengio1994learning, hochreiter2001gradient,pennington2017resurrecting}. We give in \S \ref{S:EVGP-comp} a brief account of the reasoning behind this formulation of the EVGP and explain why is also natural to define the EVGP via the moments of $Z_{p,q}.$ Then, in \S \ref{S:annealed} and \S \ref{S:quenched}, we define two precise senses, which we call annealed and quenched, in which that EVGP can occur, phrased directly in terms of the joint moments of $Z_{p,q}.$ 

%Let us contrast the entrywise (quenched and annealed) definitions of the EVGP we introduce in \S \ref{S:annealed} and \S \ref{S:quenched} below with a more traditional spectral interpretation. 

\subsection{Spectral vs. Entrywise Definitions of the EVGP}\label{S:EVGP-comp} Let us recall the rationale behind using the spectral theory of $J_{\mathcal N}$ to define the EVGP. The gradient in \eqref{E:param-update} of the loss with respect to, say, a weight $W_{\alpha,\beta}^{_{(j)}}$ connecting neuron $\alpha$ in layer $j-1$ to neuron $\beta$ in layer $j$ is
\begin{equation}
\partial \mathcal L/\partial W_{\alpha,\beta}^{(j)}=\inprod{\nabla_{\Act^{(d)}}\mathcal L}{J_{\mathcal N,\beta}(j\gives d)}\Act_\alpha^{(j-1)}\phi'(\Act_\beta^{(j)}),\label{E:SGD-expand}
\end{equation}
where $\phi'(\Act_\beta^{(j)})$ is the derivative of the non-linearity, the derivative of the loss $\mathcal L$ with respect to the output $\Act^{_{(d)}}$ of $\mathcal N$ is
\[\nabla_{\Act^{(d)}}\mathcal L=\lr{\partial \mathcal L\big/\partial \Act_{q}^{(d)},\,q=1,\ldots, n_d},\]
and we've denoted the $\beta^{th}$ row in the layer $j$ to output Jacobian $J_{\mathcal N}(j\gives d)$ by
\[ J_{\mathcal N,\beta}(j\gives d)=\lr{\partial \Act_q^{(d)}\big/\partial \Act_\beta^{(j)},\,q=1,\ldots, n_d }.\]

Since $J_{\mathcal N}(j\gives d)$ is the product of $d-j$ layer-to-layer Jacobians, its inner product with $\nabla_{\Act^{(d)}}\mathcal L$ is usually the term considered responsible for the EVGP. The \textit{worst case} distortion it can achieve on the vector $\nabla_{\Act^{(d)}}\mathcal L$ is captured precisely by its condition number, the ratio of its largest and smallest singular values. 

However, unlike the case of recurrent networks in which $J_{\mathcal N}(j\gives d)$ is $(d-j)-$fold product of a fixed matrix, when the hidden layer widths grow with the depth $d$, the dimensions of the layer $j$ to layer $j'$ Jacobians $J_{\mathcal N}(j\gives j')$ are not fixed and it is not clear to what extent the vector $\nabla_{\Act^{(d)}}\mathcal L$ will actually be stretched or compressed by the worst case bounds coming from estimates on the condition number of $J_{\mathcal N}(j\gives d).$

Moreover, on a practical level, the EVGP is about the numerical stability of the increments of the SGD updates \eqref{E:param-update} over all weights (and biases) in the network, which is directly captured by the joint distribution of the random variables
\[\set{|\partial \mathcal L/ \partial W_{\alpha,\beta}^{(j)}|^2,\,\, j=1,\ldots, n_d,\,\,\alpha=1,\ldots, n_{j-1},\,\beta=1,\ldots,n_{j}}.\]
Due to the relation \eqref{E:SGD-expand}, two terms influence the moments of $|\partial \mathcal L/ \partial W_{\alpha,\beta}^{(j)}|^2$: one coming from the \textit{activations} at layer $j-1$ and the other from the \textit{entries} of $J_{\mathcal N}(j\gives d).$ We focus in this article on the second term and hence interpret the fluctuations of the entries of $J_{\mathcal N}(j\gives d)$ as a measure of the EVGP.

To conclude, we recall a simple relationship between the moments of the entries of the input-output Jacobian $J_{\mathcal N}$ and the distribution of its singular values, which can be used to directly compare spectral and entrywise definitions of the EVGP. Suppose for instance one is interested in the average singular value of $J_{\mathcal N}$ (as in \cite{pennington2017resurrecting, pennington2018emergence}). The sum of the singular values of $J_{\mathcal N}$ is given by 
\[\mathrm{tr}(J_{\mathcal N}^TJ_{\mathcal N}) = \sum_{j=1}^{n_0} \inprod{J_{\mathcal N}^TJ_{\mathcal N} u_j}{u_j} = \sum_{j=1}^{n_0} \norm{J_{\mathcal N}u_j}^2,\]
where $\set{u_j}$ is any orthonormal basis. Hence, the average singular value can be obtained directly from the joint even moments of the entries of $J_{\mathcal N}$. Both the quenched and annealed EVGP (see \eqref{E:EVGP-annealed},\eqref{E:EVGP-quenched}) entail that the average singular value for $J_{\mathcal N}$ equals $1,$ and we prove in Theorem \ref{T:slope} (specifically \eqref{E:Z2}) that even at finite depth and width the average singular value for $J_{\mathcal N}$ equals $1$ for all the random $\Relu$ nets we consider!

One can push this line of reasoning further. Namely, the singular values of any matrix $M$ are determined by the Stieltjes transform of the empirical distribution $\sigma_{M}$ of the eigenvalues of $M^TM:$
\[S_{M}(z)=\int_{\R}\frac{d\sigma_{M}(x)}{z-x},\qquad z \in \mathbb{C}\backslash \R.\]
Writing $(z-x)^{-1}$ as a power series in $z$ shows that $S_{J_{\mathcal N}}$ is determined by traces of powers of $J_{\mathcal N}^TJ_{\mathcal N}$ and hence by the joint even moments of the entries of $J_{\mathcal N}$. We hope to estimate $S_{J_{\mathcal N}}(z)$ directly in future work.

\subsection{Annealed Exploding and Vanishing Gradients}\label{S:annealed}
Fix a sequence of positive integers $n_0,n_1,\ldots .$ For each $d\geq 1$ write $\mathcal N_d$ for the depth $d$ $\Relu$ net with hidden layer widths $n_{0},\ldots, n_{d}$ and random weights and biases (see Definition \ref{D:rand-relu} below). As in \eqref{E:zpq-def}, write $Z_{p,q}(d)$ for the partial derivative of the $q^{th}$ component of the output of $\mathcal N_d$ with respect to $p^{th}$ component of its input. We say that the family of architectures given by $\set{n_0,n_1,\ldots }$ \textit{avoids the exploding and vanishing gradient problem in the annealed sense} if for each fixed input to $\mathcal N_d$ and every $p,q$ we have
\begin{equation}\label{E:EVGP-annealed}
\E{Z_{p,q}^2(d)}=1,\quad \Var[Z_{p,q}^2(d)]=\Theta(1),\quad \sup_{d\geq 1}~ \E{Z_{p,q}^{2K}(d)}~~<~~\infty,~\,\,\forall K\geq 3.
\end{equation}
Here the expectation is over the weights and biases in $\mathcal N_d$. Architectures that avoid the EVGP in the annealed sense are ones where the typical magnitude of the partial derivatives $Z_{p,q}(d)$ have bounded (both above and below) fluctuations around a constant mean value. This allows for a reliable a priori selection of the learning rate $\lambda$ from \eqref{E:param-update} even for deep architectures. Our main result about the annealed EVGP is Theorem \ref{T:slope}: a family of neural net architectures avoids the EVGP in the annealed sense if and only if 
\begin{equation}\label{E:finite-var}
\sum_{j=1}^{\infty}\frac{1}{n_j}~~<~~\infty.
\end{equation}
We prove in Theorem \ref{T:slope} that $\E{Z_{p,q}^{2K}(d)}$ is exponential in $\sum_{j\leq d}1/n_j$ for every $K.$

\subsection{Quenched Exploding and Vanishing Gradients}\label{S:quenched}
There is an important objection to defining the EVGP as in the previous section. Namely, if a neural net $\mathcal N$ suffers from the annealed EVGP,  then it is impossible to choose an appropriate a priori learning rate $\lambda$ that works for a typical initialization. However, it may still be that for a typical realization of the weights and biases there is some choice of $\lambda$ (depending on the particular initialization), that works well for all (or most) trainable parameters in $\mathcal N.$ To study whether this is the case, we must consider the variation of the $Z_{p,q}$'s across different $p,q$ in a fixed realization of weights and biases. This is the essence of the quenched EVGP.

To formulate the precise definition, we again fix a sequence of positive integers $n_0,n_1,\ldots$ and write $\mathcal N_d$ for a depth $d$ $\Relu$ net with hidden layer widths $n_0,\ldots, n_d.$ We write as in \eqref{E:empvar-def}
\begin{equation*}
\widehat{\Var}\left[Z(d)^2\right]:=\frac{1}{M}\sum_{m=1}^M Z_{p_m,q_m}(d)^4 - \lr{\frac{1}{M}\sum_{m=1}^M Z_{p_m,q_m}(d)^2}^2,\qquad M = n_0n_d
\end{equation*}
for the empirical variance of the squares all the entries $Z_{p,q}(d)$ of the input-output Jacobian of $\mathcal N_d.$ We will say that the family of architectures given by $\set{n_{0},n_1,\ldots}$ \textit{avoids the exploding and vanishing gradient problem in the quenched sense} if
\begin{equation}\label{E:EVGP-quenched}
\E{Z_{p,q}(d)^2}=1\qquad \text{and}\qquad \E{\widehat{\Var}[Z(d)^2]}=\Theta(1).
\end{equation}
Just as in the annealed case \eqref{E:EVGP-annealed}, the expectation $\E{\cdot}$ is with respect to the weights and biases of $\mathcal N.$ In words, a neural net architecture suffers from the EVGP in the quenched sense if for a typical realization of the weights and biases the empirical variance of the squared partial derivatives $\set{Z_{p_m,q_m}^2}$ is large. 

Our main result about the quenched sense of the EVGP is Theorem \ref{T:quenched}. It turns out, at least for the $\Relu$ nets we study, that a family of neural net architectures avoids the quenched EVGP if and only if it also avoids the annealed exploding and vanishing gradient problem (i.e. if \eqref{E:finite-var} holds). 

\section{Acknowledgements} 
I thank Leonid Hanin for a number of useful conversations and for his comments on an early draft. I am also grateful to Jeffrey Pennington for pointing out an important typo in the proof of Theorem \ref{T:path-moments} and to David Rolnick for several helpful conversations and, specifically, for pointing out the relevance of the power-mean inequality for understanding $\beta$. Finally, I would like to thank several anonymous referees for their help in improving the exposition. One referee in particular raised concerns about the annealed and quenched definitions of the EVGP. Addressing these concerns resulted in the discussion in \S \ref{S:EVGP-comp}.

\section{Notation and Main Results}
\subsection{Definition of Random Networks} \label{S:model-def}
To formally state our results, we first give the precise definition of the random networks we study. For every $d\geq 1$ and each ${\bf n}=\lr{n_i}_{i=0}^d\in \Z_+^{d+1}$, write
\[\mathfrak N({\bf n},d)=\left\{\substack{ \text{fully connected feed-forward nets with }\Relu\text{ activations,}\\\text{  depth }d,  \text{ and whose } j^{th} \text{ hidden layer has width }n_j}\right\}.\]
The function $f_{\mathcal N}$ computed by $\mathcal N \in \mathfrak N({\bf n},d)$ is determined by a collection of weights and biases
\[\set{w_{\alpha,\beta}^{(j)},\, b_\beta^{(j)},\quad 1\leq \alpha \leq n_j,~~1\leq \beta \leq n_{j+1},\,\, j=0,\ldots, d-1}.\]
Specifically, given an input
\[\Act^{(0)}=\lr{\Act_i^{(0)}}_{i=1}^{n_0}\in \R^{n_0}\]
to $\mathcal N,$ we define for every $j=1,\ldots, d$
\begin{equation}\label{E:act-def}
\act_\beta^{(j)}= b_{\beta}^{(j)}+\sum_{\alpha = 1}^{n_{j-1}} \Act_\alpha^{(j-1)}w_{\alpha, \beta}^{(j)},\qquad \Act_\beta^{(j)}=\phi(\act_\beta^{(j)}),\quad 1\leq \beta \leq n_j.
\end{equation}
The vectors $\act^{(j)},\, \Act^{(j)}$ therefore represent the vectors of inputs and outputs of the neurons in the $j^{th}$ layer of $\mathcal N.$ The function computed by $\mathcal N$ takes the form
\[f_{\mathcal N}\lr{\Act^{(0)}}=f_{\mathcal N}\lr{\Act^{(0)}, w_{\alpha, \beta}^{(j)}, b_{\beta}^{(j)}}=\Act^{(d)}.\]
A random network is obtained by randomizing weights and biases. 

\begin{definition}[Random Nets]\label{D:rand-relu}
 Fix $d\geq 1, \, {\bf n}=\lr{n_0,\ldots, n_d}\in \Z_{+}^{d+1},$ and two collections of probability measures ${\bf \mu}=\lr{\mu^{(1)},\ldots, \mu^{(d)}}$ and ${\bf \nu}=\lr{\nu^{(1)},\ldots, \nu^{(d)}}$ on $\R$ such that
  \begin{itemize}
  \item[(i)] $\mu^{(j)},\nu^{(j)}$ are symmetric around $0$ for every $1\leq j \leq d.$
  \item[(ii)] the variance of $\mu^{(j)}$ is $2/(n_{j-1})$.
  \item[(iii)] $\nu^{(j)}$ has no atoms.
  \end{itemize}
A random net $\mathcal N \in \mathfrak N_{{\bf \mu},{\bf \nu}}\lr{{\bf n},d}$ is obtained by requiring that the weights and biases for neurons at layer $j$ are drawn independently from $\mu^{(j)},\nu^{(j)}:$
\[w_{\alpha,\beta}^{(j)}~~\sim~~\mu^{(j)}, \,\, b_{\beta}^{(j)}~~\sim~~\nu^{(j)}\qquad i.i.d.\]
\end{definition}
\begin{remark}\label{R:atoms}
 Condition (iii) is used when we apply Lemma \ref{L:key} in the proof of Theorem \ref{T:path-moments}. It can be removed under the restriction that $d\ll \exp\lr{\sum_{j=1}^d n_j}.$ Since this yields slightly messier but not meaningfully different results, we do not pursue this point.   
\end{remark}

\subsection{Results}\label{S:thms} Our main theoretical results are Theorems \ref{T:slope} and \ref{T:path-moments}. They concern the statistics of the slopes of the functions computed by a random neural net in the sense of Definition \ref{D:rand-relu}. To state them compactly, we define for any probability measure $\mu$ on $\R$
\[\twiddle{\mu}_{2K}:=\frac{\int_\R x^{2K} d\mu}{\lr{\int_Rx^2 d\mu}^{K}},\qquad K\geq 0,\]
and, given a collection of probability measures $\set{\mu^{(j)}}_{j=1}^d$ on $\R$, set for any $K\geq 1$
\[\twiddle{\mu}_{2K,max}:=\max_{1\leq j\leq d} \twiddle{\mu}_{2K}^{(j)}.\]
We also continue to write $Z_{p,q}$ for the entries of the input-output Jacobian of a neural net (see \eqref{E:zpq-def}). 

\begin{theorem}\label{T:slope}
   Fix $d\geq 1$ and a multi-index ${\bf n}=\lr{n_0,\ldots, n_d}\in \Z_+^{d+1}.$ Let $\mathcal N\in \mathfrak N_{{\bf \mu},{\bf \nu}}\lr{{\bf n},d}$ be a random network as in Definition \ref{D:rand-relu}. For any fixed input to $\mathcal N$, we have
   \begin{equation}\label{E:Z2}
\E{Z_{p,q}^2}=\frac{1}{n_0}.
\end{equation}
In contrast, the fourth moment of $Z_{p,q}(x)$ is exponential in $\sum_j \frac{1}{n_j}:$
\begin{equation}\label{E:Z4}
\frac{2}{n_0^2}\exp\lr{\frac{1}{2}\sum_{j=1}^{d-1}\frac{1}{n_j}}\leq\E{Z_{p,q}^4}\leq  \frac{6\twiddle{\mu}_{4,max}}{n_0^2}\exp\lr{6~\twiddle{\mu}_{4,max} \sum_{j=1}^{d-1} \frac{1}{n_j}}.
\end{equation}
Moreover, there exists a constant $C_{K,\mu}>0$ depending only on $K$ and the first $2K$ moments of the measures $\set{\mu^{(j)}}_{j=1}^d$ such that if $K< \min_{j=1}^{d-1} \set{n_j}$, then
\begin{equation}\label{E:Z2K}
  \E{Z_{p,q}^{2K}}\leq \frac{C_{K,\mu}}{n_0^K}\exp\lr{C_{K,\mu} \sum_{j=1}^{d-1}\frac{1}{n_j}}.
\end{equation}
\end{theorem}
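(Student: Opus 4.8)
The plan is to replace the input–output Jacobian by a layerwise recursion and to decouple the ReLU derivative masks from the weights. Fix an input and an input neuron $p$, put $Z^{(0)}_\beta=\delta_{\beta p}$, and for $1\le j\le d$, $1\le\beta\le n_j$ set $Z^{(j)}_\beta=\dell\Act^{(j)}_\beta/\dell\Act^{(0)}_p$; by condition (iii) of Definition \ref{D:rand-relu} we have $\act^{(j)}_\beta\ne0$ for all $j,\beta$ almost surely, so the chain rule gives $Z^{(j)}_\beta=\mathbf 1\{\act^{(j)}_\beta>0\}\sum_\alpha w^{(j)}_{\alpha,\beta}Z^{(j-1)}_\alpha$ and $Z_{p,q}=Z^{(d)}_q$. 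Let $\mathcal F_{j-1}$ be the $\sigma$-algebra of the weights and biases in layers $1,\dots,j-1$, so that $Z^{(j-1)}$ is $\mathcal F_{j-1}$-measurable. For each neuron $\beta$ of layer $j$ the involution $(w^{(j)}_{\cdot,\beta},b^{(j)}_\beta)\mapsto(-w^{(j)}_{\cdot,\beta},-b^{(j)}_\beta)$ preserves the joint law (condition (i)), flips the sign of $\act^{(j)}_\beta$, and leaves $(\sum_\alpha w^{(j)}_{\alpha,\beta}Z^{(j-1)}_\alpha)^{2c}$ unchanged; moreover $\act^{(j)}_\beta\ne0$ a.s.\ (condition (iii)), and the data feeding distinct neurons of layer $j$ are independent of each other and of $\mathcal F_{j-1}$. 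Averaging over these independent involutions yields, for distinct $\beta_1,\dots,\beta_r$ and any $c_1,\dots,c_r\ge1$, the decoupling identity
\[
\E{\prod_{l=1}^{r}(Z^{(j)}_{\beta_l})^{2c_l}\ \big|\ \mathcal F_{j-1}}=\prod_{l=1}^{r}\tfrac12\,\Phi^{(j)}_{c_l}\bigl(Z^{(j-1)}\bigr),\qquad \Phi^{(j)}_{c}(z):=\E{\bigl(\textstyle\sum_\alpha w_\alpha z_\alpha\bigr)^{2c}},\ \ w_\alpha\ \mathrm{i.i.d.}\sim\mu^{(j)}.
\]
This is essentially Lemma \ref{L:key}; it removes all correlation between the masks and the weights, and $\Phi^{(j)}_c$ is a symmetric polynomial in the $z_\alpha^2$.

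Taking $r=1$, $c_1=1$ and summing over $\beta$ gives, via $\Phi^{(j)}_1(z)=(2/n_{j-1})\norm z^2$, the exact identity $\E{\norm{Z^{(j)}}^2\mid\mathcal F_{j-1}}=(n_j/n_{j-1})\norm{Z^{(j-1)}}^2$; telescoping from $\norm{Z^{(0)}}^2=1$ and then applying the identity once more at layer $d$ gives $\E{Z_{p,q}^2}=\tfrac1{n_{d-1}}\E{\norm{Z^{(d-1)}}^2}=1/n_0$, which is \eqref{E:Z2}. For the higher moments I would reduce everything to the single scalar sequence $g_j:=\E{\norm{Z^{(j)}}^{2K}}$: expanding $\norm{Z^{(j)}}^{2K}=(\sum_\beta(Z^{(j)}_\beta)^2)^K$ and grouping the $K$ indices by coincidence pattern, the decoupling identity gives $\E{\norm{Z^{(j)}}^{2K}\mid\mathcal F_{j-1}}=\sum_{\pi\,\vdash\,[K]}(n_j)_{\downarrow|\pi|}\prod_{B\in\pi}\tfrac12\Phi^{(j)}_{|B|}(Z^{(j-1)})$, where $(n_j)_{\downarrow m}=n_j(n_j-1)\cdots(n_j-m+1)$ and the sum runs over set partitions $\pi$ of $\{1,\dots,K\}$; here one needs $|\pi|\le K<n_j$, which is exactly the hypothesis $K<\min_j n_j$. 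Since every block of every monomial of $\Phi^{(j)}_c$ carries a moment $\E{w^{2d}}=\twiddle\mu^{(j)}_{2d}(2/n_{j-1})^{d}$ (condition (ii)) and $\sum_\alpha z_\alpha^{2d}\le\norm z^{2d}$ for $d\ge1$, one gets $\Phi^{(j)}_c(z)\le C^{(\mu)}_c(2/n_{j-1})^{c}\norm z^{2c}$ with $C^{(\mu)}_c$ depending only on $c$ and the first $2c$ moments of the $\mu^{(j)}$. In the displayed sum the all-singletons partition contributes exactly $(n_j)_{\downarrow K}n_{j-1}^{-K}\norm{Z^{(j-1)}}^{2K}\le(n_j/n_{j-1})^{K}\norm{Z^{(j-1)}}^{2K}$, and each of the finitely many partitions with $|\pi|\le K-1$ contributes $O_K(1/n_j)$ times this, so $g_j\le(n_j/n_{j-1})^{K}(1+C'_K/n_j)\,g_{j-1}$. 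Iterating from $j=1$ to $d-1$ with $g_0=1$ and using $1+x\le e^x$ gives $g_{d-1}\le(n_{d-1}/n_0)^{K}\exp(C'_K\sum_{j=1}^{d-1}1/n_j)$, and one last application of the decoupling identity at layer $d$, namely $\E{Z_{p,q}^{2K}}=\tfrac12\E{\Phi^{(d)}_K(Z^{(d-1)})}\le C^{(\mu)}_K2^{K-1}n_{d-1}^{-K}g_{d-1}$, cancels the $n_{d-1}^{K}$ and yields \eqref{E:Z2K} with $C_{K,\mu}=\max\{C^{(\mu)}_K2^{K-1},C'_K\}$.

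The lower bound in \eqref{E:Z4} comes out of the same recursion run in the favorable direction. Writing $\Phi^{(j)}_2(z)=(2/n_{j-1})^2[\,3\norm z^4+(\twiddle\mu^{(j)}_4-3)\sum_\alpha z_\alpha^4\,]$ and distinguishing the cases $\twiddle\mu^{(j)}_4\ge3$ and $\twiddle\mu^{(j)}_4<3$ (using $\sum_\alpha z_\alpha^4\le\norm z^4$ and $\twiddle\mu^{(j)}_4\ge1$), one obtains $\Phi^{(j)}_2(z)\ge(2/n_{j-1})^2\norm z^4$; keeping both terms in the $K=2$ version of the recursion then gives $g_j\ge(n_j/n_{j-1})^{2}(1+1/n_j)g_{j-1}$, which telescopes (via $1+x\ge e^{x/2}$ on $(0,1]$, valid since $n_j\ge1$) to $g_{d-1}\ge(n_{d-1}/n_0)^{2}\exp(\tfrac12\sum_{j=1}^{d-1}1/n_j)$. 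Since $\E{Z_{p,q}^4}=\tfrac12\E{\Phi^{(d)}_2(Z^{(d-1)})}\ge\tfrac{2}{n_{d-1}^2}g_{d-1}$, the bound $\tfrac{2}{n_0^2}\exp(\tfrac12\sum_{j=1}^{d-1}1/n_j)$ follows.

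The step I expect to be the real obstacle is the decoupling identity (Lemma \ref{L:key}): one must set up the measure-preserving involution, check that it leaves the weight factors $(\sum_\alpha w_\alpha Z^{(j-1)}_\alpha)^{2c}$ invariant while flipping the mask, pin down exactly where the no-atoms hypothesis is used, and handle conditional independence across the neurons of a fixed layer. Once that is in hand the rest is bookkeeping — expanding $\Phi^{(j)}_c$, isolating the all-singletons term, and tracking powers of $n_{j-1}$ — which is routine, but must be done carefully enough to keep the constants uniform in the depth $d$ and to see that the slack in $\sum_\alpha z_\alpha^{2d}\le\norm z^{2d}$ is ``dimensionless'', inflating $C_{K,\mu}$ but never the exponential rate in $\sum 1/n_j$.
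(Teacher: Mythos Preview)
Your argument is correct and takes a genuinely different route from the paper's. The paper first proves an exact sum-over-paths formula for $\E{Z_{p,q}^{2K}}$ (Theorem~\ref{T:path-moments}) and then derives Theorem~\ref{T:slope} from it by combinatorics: it reduces $4K$ paths to $K$ paths via a counting lemma on the multiplicity $A(\Gamma)$, rewrites the result as an expectation of a product $X_d$ over uniform random paths, and bounds $X_d$ by counting the layers at which the paths collide. You bypass the path formula entirely and run a closed layerwise recursion on the scalar $g_j=\E{\norm{Z^{(j)}}^{2K}}$, using the same sign-flip symmetry (your decoupling identity is exactly Lemma~\ref{L:key} applied per neuron rather than per edge) together with the pointwise bounds $\Phi^{(j)}_c(z)\le C^{(\mu)}_c(2/n_{j-1})^c\norm z^{2c}$ and, for $K=2$, $\Phi^{(j)}_2(z)\ge(2/n_{j-1})^2\norm z^4$. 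This is cleaner and more probabilistic: it avoids the $A(\Gamma)$ lemma and the path-counting bookkeeping altogether, and the explicit constants in \eqref{E:Z4} drop out of your $K=2$ recursion once one bounds $\Phi^{(j)}_2(z)\le 3\twiddle\mu_{4,\max}(2/n_{j-1})^2\norm z^4$. What the paper's route buys is Theorem~\ref{T:path-moments} itself---an exact formula for all joint even moments---which is needed independently for Theorem~\ref{T:quenched}; your approach does not produce it, though it could be recovered by tracking the full product $\prod_\beta (Z^{(j)}_\beta)^{2c_\beta}$ instead of only $\norm{Z^{(j)}}^{2K}$. One small remark: your claim that the set-partition expansion ``needs $|\pi|\le K<n_j$'' is overly cautious---for $|\pi|>n_j$ the falling factorial $(n_j)_{\downarrow|\pi|}$ simply vanishes, so your recursion and the bound \eqref{E:Z2K} in fact go through without the hypothesis $K<\min_j n_j$, whereas the paper's counting argument genuinely uses $n_j-K>0$.
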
 
\begin{remark}
In \eqref{E:Z2}, \eqref{E:Z4}, and \eqref{E:Z2K}, the bias distributions $\nu^{(j)}$ play no role. However, in the derivation of these relations, we use in Lemma \ref{L:key} that $\nu^{(j)}$ has no atoms (see Remark \ref{R:atoms}). Also, the condition $K < \min_{j=1}^d \set{n_{j-1}}$ can be relaxed by allowing $K$ to violate this inequality a fixed finite number $\ell$ of times. This causes the constant $C_{K,\mu}$ to depend on $\ell$ as well. 
\end{remark}
We prove Theorem \ref{T:slope} in Appendix \ref{S:slope-pf}. The constant factor multiplying $\sum_j 1/n_j$ in the exponent on the right hand side of \eqref{E:Z4} is not optimal and can be reduced by a more careful analysis along the same lines as the proof of Theorem \ref{T:slope} given below. We do not pursue this here, however, since we are primarily interested in fixing $K$ and understanding the dependence of $\E{Z_{p,q}(x)^{2K}}$ on the widths $n_j$ and the depth $d.$ Although we've stated Theorem \ref{T:slope} only for the even moments of $Z_{p,q}$, the same techniques will give analogous estimates for any mixed even moments $\E{Z_{p_1,q}^{2K_1}\cdots Z_{p_m, q}^{2K_m}}$ when $K$ is set to $\sum_m K_m$ (see Remark \ref{R:mixed-moments}). In particular, we can estimate the mean of the empirical variance of gradients. 
\begin{theorem}\label{T:quenched}
  Fix $n_0,\ldots,n_d \in \Z_+,$ and let $\mathcal N$ be a random fully connected depth $d$ $\Relu$ net with hidden layer widths $n_0,\ldots, n_d$ and random weights and biases as in Definition \ref{D:rand-relu}. Write $M=n_0n_d$ and write $\widehat{\Var}\left[Z^2\right]$ for the empirical variance of the squares $\set{Z_{p_m,q_m}^2}$ of all $M$ input-output neuron pairs as in \eqref{E:empvar-def}. We have
\begin{equation}\label{E:quenched-est-ub}
\E{\widehat{\Var}[Z^2]}~~\leq~~ \lr{1-\frac{1}{M}}  \frac{6\twiddle{\mu}_{4,max}}{n_0^2} \exp \lr{6~\twiddle{\mu}_{4,max}\sum_{j=1}^{d-1}\frac{1}{n_j}}
\end{equation}
and
\begin{equation}\label{E:quenched-est-lb}
\E{\widehat{\Var}[Z^2]}~~\geq ~~ \frac{1}{n_0^2}\lr{1-\frac{1}{M}}\lr{ 1-\eta  + \frac{4\eta}{n_1}\lr{\twiddle{\mu}_4^{(1)}-1}e^{-\frac{1}{n_1}}  } \exp\lr{\frac{1}{2}\sum_{j=1}^{d-1}\frac{1}{n_j}}
\end{equation}
where
\[\eta:=\frac{\#\setst{m_1,m_2}{m_1\neq m_2,\,\, q_{m_1}=q_{m_2}}}{M(M-1)}=\frac{n_0-1}{n_0n_d-1}.\]
Hence, the family $\mathcal N_d$ of $\Relu$ nets avoids the exploding and vanishing gradient problem in the quenched sense if and only if
\[\sum_{j=1}^\infty \frac{1}{n_j}<\infty.\]
\end{theorem}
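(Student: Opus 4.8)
The plan is to write $\E{\widehat{\Var}[Z^2]}$ as a short combination of joint even moments of the entries $Z_{p,q}$ and then invoke the moment estimates of Theorem~\ref{T:slope}, together with the mixed-moment version of Theorem~\ref{T:path-moments} promised in Remark~\ref{R:mixed-moments}. Expanding the definition \eqref{E:empvar-def} and taking expectations,
\[
\E{\widehat{\Var}[Z^2]}=\frac1M\sum_{m=1}^M\E{Z_{p_m,q_m}^4}-\frac1{M^2}\sum_{m_1,m_2=1}^M\E{Z_{p_{m_1},q_{m_1}}^2\,Z_{p_{m_2},q_{m_2}}^2},
\]
so peeling the diagonal $m_1=m_2$ off the double sum and splitting the off-diagonal terms according to whether $q_{m_1}=q_{m_2}$ yields
\[
\E{\widehat{\Var}[Z^2]}=\lr{1-\tfrac1M}\Big[\ \overline{\E{Z^4}}\ -\ \eta\,\overline{A}\ -\ (1-\eta)\,\overline{B}\ \Big],
\]
where $\overline{\E{Z^4}}:=\frac1M\sum_m\E{Z_{p_m,q_m}^4}$, $\overline{A}$ is the average of $\E{Z_{p_1,q}^2Z_{p_2,q}^2}$ over off-diagonal index pairs sharing an output neuron (so $p_1\ne p_2$), $\overline{B}$ the average of $\E{Z_{p_1,q_1}^2Z_{p_2,q_2}^2}$ over pairs with distinct output neurons ($q_1\ne q_2$), and $\eta=(n_0-1)/(n_0n_d-1)$ is exactly the proportion of off-diagonal index pairs sharing an output neuron, as in the statement. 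The upper bound \eqref{E:quenched-est-ub} already drops out: $Z_{p,q}^2\ge 0$ makes every cross moment nonnegative, so the bracket is at most $\overline{\E{Z^4}}$, and the right-hand inequality of \eqref{E:Z4} applies to each $\E{Z_{p_m,q_m}^4}$ and hence to their average.

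For the lower bound \eqref{E:quenched-est-lb} I need a matching lower bound on the bracket: a lower bound for $\overline{\E{Z^4}}$, supplied by the left-hand inequality of \eqref{E:Z4}, together with upper bounds for $\overline{A}$ and $\overline{B}$. The device that makes the cross moments tractable is a conditional-independence reduction in the last layer. Conditionally on the weights and biases of layers $1,\dots,d-1$, each $Z_{p,q}$ is a function of the last-layer parameters $\lr{w^{(d)}_{\cdot,q},\,b^{(d)}_q}$ alone, and these are independent across $q$; moreover the symmetry of $\mu^{(d)},\nu^{(d)}$ together with the $\Relu$ form of $\phi$ force the conditional second moment $\E{Z_{p,q}^2\mid\text{layers }1,\dots,d-1}$ to equal a random variable $R_p$ that depends on the first $d-1$ layers only, not on $q$ (it is a normalized sum of squares of the layer-$(d-1)$ Jacobian entries). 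Hence $\E{Z_{p_1,q_1}^2Z_{p_2,q_2}^2}=\E{R_{p_1}R_{p_2}}$ whenever $q_1\ne q_2$, while for a common output neuron one obtains $\E{R_{p_1}R_{p_2}}$ plus an extra term carrying the factor $\twiddle{\mu}_4^{(d)}-1$ (and, taking $p_1=p_2$, the same computation reproduces the fourth-moment recursion with its leading constant).

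Iterating this reduction layer by layer is precisely the recursion behind Theorems~\ref{T:slope} and \ref{T:path-moments}; running it through---or, equivalently, substituting the exact joint-moment formulas of Theorem~\ref{T:path-moments}---expresses $\overline{A}$ and $\overline{B}$ as products over the layers. Bounding these products, lower bounding $\overline{\E{Z^4}}$ by \eqref{E:Z4}, and recombining then gives \eqref{E:quenched-est-lb}: the full exponential $\exp\lr{\tfrac12\sum_{j=1}^{d-1}1/n_j}$ comes from the same source as the left side of \eqref{E:Z4}, and carrying the exact leading coefficients through the final step to the input layer produces the prefactor $1-\eta+\frac{4\eta}{n_1}\lr{\twiddle{\mu}_4^{(1)}-1}e^{-1/n_1}$.

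Finally, both \eqref{E:quenched-est-ub} and \eqref{E:quenched-est-lb} are $\exp\big(\Theta\big(\sum_{j=1}^{d-1}1/n_j\big)\big)$ times a factor bounded above and below by positive constants, so $\E{\widehat{\Var}[Z^2]}=\Theta(1)$ along the family $\mathcal N_d$ if and only if $\sum_j 1/n_j<\infty$---that is, if and only if the annealed condition \eqref{E:finite-var} holds---which is the asserted equivalence. I expect the main obstacle to be the constant bookkeeping in the lower bound: crude estimates of $\overline{A}$ and $\overline{B}$ give a positive but much cruder prefactor, whereas the stated one correctly reflects the contribution of heavy-tailed first-layer weights through $\twiddle{\mu}_4^{(1)}$, and recovering it requires the exact mixed-moment formulas of Theorem~\ref{T:path-moments} rather than merely their exponential order. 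A minor separate point is the degenerate regime $n_d=1$ with $\twiddle{\mu}_4^{(1)}=1$, where the written lower bound is vacuous and a short ad hoc argument is needed.
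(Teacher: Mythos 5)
Your proposal is correct and takes essentially the same route as the paper: you decompose $\E{\widehat{\Var}[Z^2]}$ as $(1-1/M)\bigl(\overline{\E{Z^4}}-\eta\overline{A}-(1-\eta)\overline{B}\bigr)$, which is exactly the paper's $\frac{1}{M^2}\sum_{m_1\neq m_2}\bigl(\E{Z_{p,q}^4}-\E{Z_{m_1}^2Z_{m_2}^2}\bigr)$ after splitting off-diagonal pairs by whether outputs coincide, and the lower bound then rests on matched term-by-term comparisons of the path sums from Theorem~\ref{T:path-moments}. Your conditional-independence reduction at the last layer (and the remark that for a common output neuron the extra term carries $\twiddle{\mu}_4^{(d)}-1$) is a restatement of the paper's bijection between collections of paths together with the explicit comparison of $C_j(\Gamma)$ and $C_j(\bar\Gamma)$ at $j=1,d$; you correctly identify both that the exact prefactor requires the precise mixed-moment formulas rather than their exponential order, and that the lower bound degenerates when $n_d=1$ and $\twiddle{\mu}_4^{(1)}=1$ (an edge case the paper's statement also does not rule out).
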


We prove Theorem \ref{T:quenched} in Appendix \ref{S:quenched-pf}. The results in Theorems \ref{T:slope} and \ref{T:quenched} are based on \textit{exact} expressions, given in Theorem \ref{T:path-moments}, for the even moments $\E{Z_{p,q}(x)^{2K}}$ in terms only of the moments of the weight distributions $\mu^{(j)}$. To give the formal statement, we introduce the following notation. For any ${\bf n}=\lr{n_i}_{i=0}^d$ and any $1\leq p \leq n_0,\, 1\leq q \leq n_d,$ we say that a path $\gamma$ from the $p^{th}$ input neuron to the $q^{th}$ output neuron in $\mathcal N\in \mathfrak N\lr{{\bf n},d}$ is a sequence 
\[\set{\gamma(j)}_{j=0}^d,\qquad 1\leq \gamma(j)\leq n_j,\quad \gamma(0)=p,\quad \gamma(d)=q,\]
so that $\gamma(j)$ represents a neuron in the $j^{th} $ layer of $\mathcal N.$ Similarly, given any collection of $K\geq 1$ paths $\Gamma=\lr{\gamma_k}_{k=1}^K$ that connect (possibly different) neurons in the input of $\mathcal N$ with neurons in its output and any $1\leq j \leq d$, denote by
\[\Gamma(j)=\bigcup_{\gamma \in \Gamma}\set{\gamma(j)}\]
the neurons in the $j^{th}$ layer of $\mathcal N$ that belong to at least one element of $\Gamma.$ Finally, for every $\alpha \in \Gamma(j-1)$ and $\beta\in \Gamma(j)$, denote by 
\[\abs{\Gamma_{\alpha,\beta}(j)}=\#\setst{\gamma \in \Gamma}{\gamma(j-1)=\alpha,\,\gamma(j)=\beta}\]
the number of paths in $\Gamma$ that pass through neuron $\alpha$ at layer $j-1$ and through neuron $\beta$ at layer $j.$
\begin{theorem}\label{T:path-moments}
 Fix $d\geq 1$ and ${\bf n}=\lr{n_0,\ldots, n_d}\in \Z_+^{d+1}.$ Let $\mathcal N\in \mathfrak N_{{\bf \mu},{\bf \nu}}\lr{{\bf n},d}$ be a random network as in Definition \ref{D:rand-relu}. For every $K\geq 1$ and all $1\leq p \leq n_0,\,\, 1\leq q \leq n_d$, we have
  \begin{equation}\label{E:path-moments}
    \E{Z_{p,q}^{2K}}=\sum_{\Gamma} \prod_{j=1}^d C_j(\Gamma),
  \end{equation}
where the sum is over ordered tuples $\Gamma=\lr{\gamma_1,\ldots, \gamma_{2K}}$ of paths in $\mathcal N$ from $p$ to $q$ and
\[C_j(\Gamma) = \lr{\frac{1}{2}}^{\abs{\Gamma(j)}}\prod_{\substack{\alpha\in \Gamma(j-1)\\ \beta\in \Gamma(j)}}   \mu_{\abs{\Gamma_{\alpha,\beta}(j)}}^{(j)},\]
where for every $r\geq 0,$ the quantity $\mu_r^{(j)}$ denotes the $r^{th}$ moment of the measure $\mu^{(j)}.$
\end{theorem}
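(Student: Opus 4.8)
The plan is to apply the chain rule, expand the $2K$-th power as a sum over ordered tuples of paths, and then integrate out the weights and biases one layer at a time, working from the output layer $d$ down toward the input. First, because $\nu^{(j)}$ has no atoms, almost surely every pre-activation $\act_\beta^{(j)}$ is nonzero, so $f_{\mathcal N}$ is affine near the given input and the chain rule yields $Z_{p,q}=\sum_{\gamma\colon p\to q}\prod_{j=1}^{d}\xi_{\gamma(j)}^{(j)}\,w_{\gamma(j-1),\gamma(j)}^{(j)}$, where $\xi_\beta^{(j)}:=\mathbf{1}\{\act_\beta^{(j)}>0\}$ is the ReLU derivative at neuron $(j,\beta)$. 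Raising this to the power $2K$ and expanding, each resulting monomial is labeled by an ordered tuple $\Gamma=(\gamma_1,\dots,\gamma_{2K})$ of paths from $p$ to $q$. Collecting the weight factors edge by edge produces $\prod_{\alpha\in\Gamma(j-1),\beta\in\Gamma(j)}(w_{\alpha,\beta}^{(j)})^{\abs{\Gamma_{\alpha,\beta}(j)}}$ at layer $j$, and collecting the indicator factors node by node produces $\prod_{\beta\in\Gamma(j)}\xi_\beta^{(j)}$, using that $\xi^m=\xi$ for $m\ge 1$ since $\xi$ is $\{0,1\}$-valued. Writing $A_j(\Gamma)$ for the product of these two factors, we get $Z_{p,q}^{2K}=\sum_\Gamma\prod_{j=1}^{d}A_j(\Gamma)$; the sum over $\Gamma$ is finite, so it exchanges with the expectation, and the theorem reduces to proving $\E{\prod_{j=1}^{d}A_j(\Gamma)}=\prod_{j=1}^{d}C_j(\Gamma)$ for each fixed $\Gamma$.

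The core is a single-layer conditional-expectation computation, which is where I would invoke Lemma \ref{L:key}. Let $\mathcal{F}_{j-1}$ be the $\sigma$-algebra generated by all weights and biases up to layer $j-1$, fix $\beta\in\Gamma(j)$, and set $m_\alpha:=\abs{\Gamma_{\alpha,\beta}(j)}$. Given $\mathcal{F}_{j-1}$, the quantity $\act_\beta^{(j)}$ equals a linear combination of the independent symmetric weights $\{w_{\alpha,\beta}^{(j)}\}_{\alpha\in\Gamma(j-1)}$ plus an independent term (the bias together with the remaining weights into $\beta$) whose law is symmetric and atomless. The law-preserving sign flip $w_{\alpha,\beta}^{(j)}\mapsto-w_{\alpha,\beta}^{(j)}$ on these weights, combined with the symmetry of $\nu^{(j)}$, turns $\mathbf{1}\{\act_\beta^{(j)}>0\}$ into $\mathbf{1}\{\act_\beta^{(j)}<0\}=1-\mathbf{1}\{\act_\beta^{(j)}>0\}$ (here atomlessness of $\nu^{(j)}$ is used, so that $\act_\beta^{(j)}=0$ has probability zero), while multiplying the monomial $\prod_\alpha(w_{\alpha,\beta}^{(j)})^{m_\alpha}$ by $(-1)^{\sum_\alpha m_\alpha}$. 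When $\sum_\alpha m_\alpha$ is even this gives $E=\prod_\alpha\mu_{m_\alpha}^{(j)}-E$, hence $E=\frac12\prod_\alpha\mu_{m_\alpha}^{(j)}$, where $E$ denotes the conditional expectation of $\xi_\beta^{(j)}\prod_\alpha(w_{\alpha,\beta}^{(j)})^{m_\alpha}$. Since the weight and bias families of distinct $\beta\in\Gamma(j)$ are disjoint and independent, $A_j(\Gamma)$ factors over $\beta\in\Gamma(j)$, and therefore $\E{A_j(\Gamma)\mid\mathcal{F}_{j-1}}=C_j(\Gamma)$ — a deterministic constant — provided every $N_\beta^{(j)}:=\#\{k:\gamma_k(j)=\beta\}=\sum_\alpha m_\alpha$ with $\beta\in\Gamma(j)$ is even.

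I would then peel off layers from $j=d$ down to $j=1$. As $A_1(\Gamma),\dots,A_{j-1}(\Gamma)$ are $\mathcal{F}_{j-1}$-measurable, $\E{\prod_{j'\le j}A_{j'}(\Gamma)}=\E{\big(\prod_{j'<j}A_{j'}(\Gamma)\big)\,\E{A_j(\Gamma)\mid\mathcal{F}_{j-1}}}$, so once $\E{A_j(\Gamma)\mid\mathcal{F}_{j-1}}=C_j(\Gamma)$ one pulls the constant out and recurses. At the output layer $\Gamma(d)=\{q\}$ and $N_q^{(d)}=2K$ is even, so the base case holds. For the inductive step, $\mu^{(j)}$ is symmetric (so $\mu_r^{(j)}=0$ for odd $r$) and nondegenerate (so $\mu_r^{(j)}>0$ for even $r$); hence $C_j(\Gamma)\neq 0$ forces every edge multiplicity $\abs{\Gamma_{\alpha,\beta}(j)}$ to be even, which forces every $N_\alpha^{(j-1)}=\sum_{\beta}\abs{\Gamma_{\alpha,\beta}(j)}$ to be even — exactly the evenness needed to peel layer $j-1$. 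Thus either all $C_j(\Gamma)$ are nonzero and the recursion runs to the bottom, giving $\E{\prod_j A_j(\Gamma)}=\prod_j C_j(\Gamma)$; or there is a largest index $j_0$ with $C_{j_0}(\Gamma)=0$, in which case the recursion still reaches layer $j_0$ (every higher $C_j(\Gamma)$ being nonzero), and pulling out $C_{j_0}(\Gamma)=0$ gives $\E{\prod_j A_j(\Gamma)}=0=\prod_j C_j(\Gamma)$.

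The step I expect to be the main obstacle — and the reason the recursion must run from the output inward rather than from the input outward — is exactly this parity bookkeeping. The sign-flip identity pins $\E{A_j(\Gamma)\mid\mathcal{F}_{j-1}}$ down to a constant only when the total degree $\sum_\alpha m_\alpha$ at each $\beta\in\Gamma(j)$ is even; for odd total degree it degenerates into the vacuous identity $0=0$, and the conditional expectation genuinely depends on the layer-$(j-1)$ activations (through the bias CDF). One must therefore know the relevant multiplicities are even \emph{before} peeling a layer, and that evenness is manufactured by the layers already integrated out, which lie above rather than below; an odd multiplicity at some layer instead propagates upward into an odd edge multiplicity — hence a vanishing $C_j(\Gamma)$ — one layer closer to the output. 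Note also that proceeding bottom-up fails outright: given $\mathcal{F}_{j-1}$, the factor $A_j(\Gamma)$ is not independent of the later layers (which depend on $\Act^{(j)}$, itself built from the layer-$j$ weights and indicators), whereas top-down $\E{A_j(\Gamma)\mid\mathcal{F}_{j-1}}$ is a constant that comes straight out of the expectation.
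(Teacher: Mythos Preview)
Your proof is correct and follows essentially the same approach as the paper: the path expansion of $Z_{p,q}$, the tower property to peel layers from $d$ down to $1$, the sign-flip symmetry argument (which is exactly the content of the paper's Lemma~\ref{L:key}, derived inline in your write-up), and the parity bookkeeping to handle tuples $\Gamma$ with some odd edge multiplicity. Your case analysis via the largest $j_0$ with $C_{j_0}(\Gamma)=0$ is in fact a bit cleaner than the paper's treatment of the same dichotomy, but the underlying argument is the same.
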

\begin{remark}\label{R:mixed-moments}
The expression \eqref{E:path-moments} can be generalized to case of mixed even moments. Namely, given $m\geq 1$ and for each $1\leq m \leq M$ integers $K_m\geq 0$ and $1\leq p_m\leq n_0\,\, 1\leq q_m\leq n_d$, we have
\begin{equation}\label{E:mixed-moments}
    \E{\prod_{m=1}^M Z_{p_m,q_m}(x)^{2K_m}}=\sum_{\Gamma} \prod_{j=1}^d C_j(\Gamma),
  \end{equation}
where now the sum is over collections $\Gamma=\lr{\gamma_1,\ldots, \gamma_{2K}}$ of $2K=\sum_m 2K_m$ paths in $\mathcal N$ with exactly $2K_m$ paths from $p_m$ to $q_m.$ The proof is identical up to the addition of several well-placed subscripts. 
\end{remark}
See Appendix \ref{S:path-moments-pf} for the proof of Theorem \ref{T:path-moments}. 

%\subsubsection*{Acknowledgements} I am grateful to Leonid Hanin for many valuable comments on an early draft of this article. I am also indebted to Brandon Rule for several helpful discussions, and I would like to thank David Rolnick for pointing out the useful application of the power mean inequality, given just below \eqref{E:power-mean}, to interpreting the statement of Theorem \ref{T:slope}. Finally, I would like to thank Mihai Nica for pointing out an error in Lemma \ref{L:A-formula} and an anonymous referee for several thought provoking critiques on an earlier draft that lead to the discussion of annealed vs. quenched vs. spectral definitions of the exploding and vanishing gradient problem. 

\bibliographystyle{alpha}
  \bibliography{DL_bibliography}
\newpage
\appendix

\section{Proof of Theorem \ref{T:path-moments}}\label{S:path-moments-pf}
We will use the following observation.
\begin{Lem}\label{L:key}
 Suppose $X, w_1,\ldots, w_n$ are independent real-valued random variables whose distributions are symmetric around $0.$ Assume also that the distribution of $X$ has no atoms (i.e. $\P\lr{X=x}=0$ for all $x\in \R$), and fix any bounded positive function $\psi:\R\gives \R_+$ with the property
 \begin{equation}\label{E:psi-def}
\psi(t)+\psi(-t)=1.
\end{equation}
Then for any constants $a_1,\ldots, a_n\in \R$ and any non-negative integers $k_1,\ldots, k_n$ whose sum is even, we have
\[\E{\prod_{j=1}^n w_j^{k_j} \psi(X+\sum_j w_j a_j)}=\frac{1}{2}\prod_{j=1}^n \E{w_j^{k_j}}.\]
\end{Lem}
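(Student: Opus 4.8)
The plan is to prove Lemma \ref{L:key} by conditioning on the variables $w_1,\ldots,w_n$ and exploiting the symmetry of $X$. First I would fix a realization of $w_1,\ldots,w_n$, write $s=\sum_j w_j a_j$, and consider the inner conditional expectation $\E{\psi(X+s) \mid w_1,\ldots,w_n}$. The key observation is that since $X$ is symmetric around $0$, the random variable $X$ has the same distribution as $-X$, so
\begin{equation*}
\E{\psi(X+s)} = \E{\psi(-X+s)} = \E{\psi(-(X-s))}.
\end{equation*}
Now I want to relate $\E{\psi(X-s)}$ to $\E{\psi(X+s)}$; this requires a further symmetry, but here is where the atomlessness and a slightly different grouping come in. The cleanest route is to instead fix \emph{all but one sign}: condition on $|w_1|,\ldots,|w_n|$ and $X$, and resample the signs $\sgn(w_j)$, which are i.i.d.\ uniform on $\{\pm1\}$ by the symmetry assumption (ii) and independence.

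Concretely, write $w_j = \sigma_j |w_j|$ where $\sigma_j \in \{\pm 1\}$ are i.i.d.\ Rademacher, independent of $X$ and of $\{|w_j|\}$. Then
\begin{equation*}
\prod_{j=1}^n w_j^{k_j} = \Big(\prod_j \sigma_j^{k_j}\Big)\prod_j |w_j|^{k_j},
\end{equation*}
and the argument of $\psi$ is $X + \sum_j \sigma_j |w_j| a_j$. Conditioning on $X$ and $\{|w_j|\}$, I would average over the $\sigma_j$'s. Flipping \emph{all} signs simultaneously ($\sigma_j \mapsto -\sigma_j$ for every $j$) together with using $X \overset{d}{=} -X$ sends the argument of $\psi$ to its negative and multiplies $\prod_j \sigma_j^{k_j}$ by $(-1)^{\sum_j k_j} = 1$ since $\sum k_j$ is even; combined with \eqref{E:psi-def}, i.e.\ $\psi(t) + \psi(-t) = 1$, this produces the factor $\tfrac12$. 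The remaining subtlety is that, after pinning down the $\tfrac12$, one still needs the cross terms to vanish; this is handled by noting that for any \emph{proper} subset flip, the Rademacher average of $\prod_j \sigma_j^{k_j}$ times a function of $X+\sum \sigma_j|w_j|a_j$ decouples — or, more simply, that once the $\tfrac12$ is extracted, the identity $\E{\prod_j w_j^{k_j}\psi(\cdots)} = \tfrac12 \E{\prod_j w_j^{k_j}}$ is exactly what needs proving, and $\E{\prod_j w_j^{k_j}} = \prod_j \E{w_j^{k_j}}$ by independence.

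The careful way to organize this: let $I = \E{\prod_j w_j^{k_j}\psi(X+\sum_j w_j a_j)}$. Apply the change of variables $(X, w_1,\ldots,w_n)\mapsto(-X,-w_1,\ldots,-w_n)$, which preserves the joint law by symmetry of all the distributions and independence. This gives $I = \E{\prod_j (-w_j)^{k_j}\psi(-X - \sum_j w_j a_j)} = \E{\prod_j w_j^{k_j}\,\psi(-(X+\sum_j w_j a_j))}$, again using $\sum k_j$ even. Adding the two expressions for $I$ and using $\psi(t)+\psi(-t)=1$ yields $2I = \E{\prod_j w_j^{k_j}}$, hence $I = \tfrac12\prod_j \E{w_j^{k_j}}$ by independence. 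This is clean and the main obstacle is simply making sure the sign-flip change of variables is legitimate and that the $(-1)^{\sum k_j}=1$ cancellation is correctly invoked; the atomlessness of $X$ is not actually needed for \emph{this} lemma in isolation (it is flagged for where the lemma gets \emph{applied} in Theorem \ref{T:path-moments}, to guarantee the relevant $\psi$ — presumably an indicator like ${\bf 1}_{[0,\infty)}$ — is well-defined up to null events at the split points), so I would double-check whether the hypothesis is used at all here or is carried along for the application. The truly routine part is the final appeal to independence; the only thing to get right is the bookkeeping of which symmetry (of $X$ vs.\ of the $w_j$'s) is invoked and that all are used jointly in a single measure-preserving transformation.
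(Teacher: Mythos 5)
Your "careful way to organize this" — the global sign flip $(X,w_1,\ldots,w_n)\mapsto(-X,-w_1,\ldots,-w_n)$, using that $\sum_j k_j$ is even so $(-1)^{\sum k_j}=1$, then averaging with the original expression and invoking $\psi(t)+\psi(-t)=1$ plus independence — is exactly the paper's proof. Your side remark that atomlessness is not needed if $\psi(t)+\psi(-t)=1$ holds for \emph{every} $t$, and is only carried along because the eventual $\psi=\mathbf{1}_{\{\cdot>0\}}$ satisfies it only off $t=0$, is also correct and matches the role it plays when the lemma is applied in Theorem \ref{T:path-moments}.
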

\begin{proof}
Using that $X\stackrel{d}{=}-X,w_j\stackrel{d}{=}-w_j$ and that $\sum_j k_j$ is even, we have
  \begin{align*}
    \E{\prod_{j=1}^n w_j^{k_j} \psi(X+\sum_j w_j a_j)} & = \E{ \prod_{j=1}^n w_j^{k_j}\psi(-(X+\sum_j w_j a_j))}.
  \end{align*}
Averaging these two expressions we combine \eqref{E:psi-def} with the fact that $X$ is independent of $\set{w_j}_{j=1}^n$ and its law has no atoms to obtain the desired result.
s\end{proof}

We now turn to the proof of Theorem \ref{T:path-moments}. To this end, fix $d\geq 1,$ a collection of positive integers ${\bf n}=\lr{n_i}_{i=0}^d$, and let $\mathcal N \in \mathfrak N_{{\bf \mu},{\bf \nu}}\lr{{\bf n},d}$. Let us briefly recall the notation for paths from \S \ref{S:thms}. Given $1\leq p \leq n_0$ and $1\leq q\leq n_d,$ we defined a path $\gamma$ from p to q to be a collection $\set{\gamma(j)}_{j=0}^d$ of neurons so that $\gamma(0)=p,\, \gamma(d)=q, $ and $\gamma(j)\in \set{1,\ldots, n_j}.$ The numbers $\gamma(j)$ should be thought of as neurons in the $j^{th}$ hidden layer of $\mathcal N$. Given such a collection, we obtain for each $j$ a weight 
\begin{equation}\label{E:w-def}
w_\gamma^{(j)}:=w_{\gamma(j-1),\gamma(j)}^{(j)}
\end{equation}
between each two consecutive neurons along the path $\gamma.$ Our starting point is the expression 
\begin{equation}\label{E:gen-path-form}
Z_{p,q}=\sum_{\substack{\text{paths }\gamma \\\text{from p to q}}} \prod_{j=1}^d w_\gamma^{(j)} \,{\bf 1}_{\set{\act_{\gamma(j)}^{(j)}>0}},
\end{equation}
where $\act^{(j)}$ are defined as in \eqref{E:act-def}. This expression is well-known and follows immediately form the chain rule (c.f. e.g. equation (1) in \cite{choromanska2015loss}). We therefore have
\begin{align*}
  Z_{p,q}^{2K}=\sum_{\substack{\text{paths }\gamma_1,\ldots, \gamma_{2K} \\\text{from p to q}}} \prod_{j=1}^d \prod_{k=1}^{2K}w_{\gamma_k}^{(j)}\,{\bf 1}_{\set{\act_{\gamma_k(j)}^{(j)}>0}}.
\end{align*}
We will prove a slightly more general statement than in the formulation of Theorem \ref{T:path-moments}. Namely, suppose $\Gamma=\lr{\gamma_1,\ldots, \gamma_{2K}}$ is any collection of paths from the input of $\mathcal N$ to the output (the paths are not required to have the same starting and ending neurons) such that for every $\beta \in \Gamma(d)$, 
\[\#\setst{\gamma\in \Gamma}{\gamma(d)=\beta}\quad \text{ is even.}\] 
We will show that
\begin{equation}\label{E:gen-path-moments}
\E{\prod_{j=1}^d\prod_{k=1}^{2K}w_{\gamma_k}^{(j)}\,{\bf 1}_{\set{\act^{(j)}_{\gamma_k(j)}>0}}}=\prod_{j=1}^d \lr{\frac{1}{2}}^{\abs{\Gamma(j)}}\prod_{\substack{\alpha\in \Gamma(j-1)\\ \beta\in \Gamma(j)}} \mu_{\abs{\Gamma_{\alpha,\beta}(j)}}^{(j)}.
\end{equation}
To evaluate the expectation in \eqref{E:gen-path-moments}, note that the computation done by $\mathcal N$ is a Markov chain with respect to the layers (i.e. given $\Act^{(j-1)}$, the activations at layers $j,\ldots, d$ are independent of the weight and biases up to and including layer $j-1.$) Hence, denoting by $\mathcal F_{\leq d-1}$ the sigma algebra generated by the weight and biases up to and including layer $d-1$, the tower property for expectation and the Markov property yield
\begin{align}
\notag&\E{\prod_{j=1}^d\prod_{k=1}^{2K}w_{\gamma_k}^{(j)}\,{\bf 1}_{\set{\act_{\gamma_k(j)}^{(j)}>0}}}\\
\notag&\qquad = \E{\prod_{j=1}^{d-1}\prod_{k=1}^{2K}w_{\gamma_k}^{(j)}\,{\bf 1}_{\set{\act_{\gamma_k(j)}^{(j)}>0}}\E{\prod_{k=1}^{2K}w_{\gamma_k}^{(d)}\,{\bf 1}_{\set{\act_{\gamma_k(d)}^{(d)}>0}}~\big|~\mathcal F_{\leq d-1}}}\\
\label{E:inductive-form}&\qquad = \E{\prod_{j=1}^{d-1}\prod_{k=1}^{2K}w_{\gamma_k}^{(j)}\,{\bf 1}_{\set{\act_{\gamma_k(j)}^{(j)}>0}}\E{\prod_{k=1}^{2K}w_{\gamma_k}^{(d)}\,{\bf 1}_{\set{\act_{\gamma_k(d)}^{(d)}>0}}~\big|~\Act^{d-1}}}.
\end{align}
Next, observe that for each $1\leq j\leq d,$ conditioned on $\Act^{(j-1)}$, the families of random variables $\set{w_{\alpha, \beta}^{(j)},\, \act_\beta^{(j)}}_{\alpha=1}^{n_{j-1}}$ are independent for different $\beta.$ For $j=d$ this implies
\begin{equation}\label{E:j-prod}
\E{\prod_{k=1}^{2K}w_{\gamma_k}^{(d)} \,{\bf 1}_{\set{\act_{\gamma_k(d)}^{(d)}>0}}~\big|~\Act^{(d-1)}}=\prod_{\beta \in \Gamma(d)} \E{\prod_{\substack{k=1\\ \gamma_k(d)=\beta}}^{2K} w_{\gamma_k}^{(d)}\,{\bf 1}_{\set{\act_{\gamma_k(d)}^{(d)}>0}}~\bigg|~\Act^{(d-1)}}.
\end{equation}
Consider the decomposition
\begin{equation}\label{E:decomp}
\act_\beta^{(d)}=\act_{\Gamma,\beta}^{(d)}~~+~~\widehat{\act}_{\Gamma, \beta}^{(d)},
\end{equation}
where
\begin{align*}
\act_{\Gamma, \beta}^{(d)}&:=\sum_{\alpha \in \Gamma(d-1)} \Act_{\alpha}^{(d-1)}w_{\alpha, \beta}^{(d)}\\
\widehat{\act}_{\Gamma, \beta}^{(d)}&=\act_\beta^{(d)}-\act_{\Gamma,\beta}^{(d)}=b_\beta^{(d)}+\sum_{\alpha \not\in \Gamma(d-1)} \Act_{\alpha}^{(d-1)}w_{\alpha, \beta}^{(d)}.
\end{align*}
Let us make several observations about $\widehat{\act}_{\Gamma,\beta}^{(d)}$ and $\act_{\Gamma, \beta}^{(d)}$  \textit{ when conditioned on $\Act^{(d-1)}$}. First, the conditioned random variable $\widehat{\act}_{\Gamma,\beta}^{(d-1)}$ is independent of the conditioned random variable $\act_{\Gamma,\beta}^{(d-1)}$. Second, the distribution of $\widehat{\act}_{\Gamma, \beta}^{(d)}$ conditioned on $\Act^{(d-1)}$ is symmetric around $0.$ Third, since we assumed that the bias distributions $\nu^{(j)}$ for $\mathcal N$ have no atoms, the conditional distribution of $\widehat{\act}_{\Gamma,\beta}^{(d)}$ also has no atoms. Fourth, $\act_{\Gamma,\beta}^{(d-1)}$ is a linear combination of the weights $\set{w_{\alpha, \beta}^{(d)}}_{\alpha \in \Gamma(j-1)}$ with given coefficients $\set{\Act_{\alpha}^{(d-1)}}_{\alpha \in \Gamma(j-1)}.$ Since the weight distributions $\mu^{(j)}$ for $\mathcal N$ are symmetric around $0,$ the above five observations, together with \eqref{E:decomp} allow us to apply Lemma \ref{L:key} and to conclude that
\begin{align}\label{E:inductive-key}
&\E{\prod_{k=1}^{2K}w_{\gamma_k}^{(j)}\,{\bf 1}_{\set{\act_{\gamma_k(d)}^{(d)}>0}} ~\big|~\Act^{(d-1)}}=\lr{\frac{1}{2}}^{\abs{\Gamma(d)}}\prod_{\substack{\beta\in \Gamma(d)\\\alpha\in \Gamma(d-1)}}\mu_{\abs{\Gamma_{\alpha,\beta}(d)}}^{(d)}.
\end{align}
Combining this with \eqref{E:inductive-form} yields 
\begin{align*}
  &\E{\prod_{j=1}^d\prod_{k=1}^{2K}w_{\gamma_k}^{(j)}\,{\bf 1}_{\set{\act_{\gamma_k(j)}^{(j)}>0}}}\\
&\qquad=\E{\prod_{j=1}^{d-1}\prod_{k=1}^{2K}w_{\gamma_k}^{(j)}\,{\bf 1}_{\set{\act_{\gamma_k(j)}^{(j)}>0}}}\lr{\frac{1}{2}}^{\abs{\Gamma(d)}}\prod_{\substack{\beta\in \Gamma(d)\\\alpha\in \Gamma(d-1)}}\mu_{\abs{\Gamma_{\alpha,\beta}(d)}}^{(d)}.
\end{align*}
To complete the argument, we must consider two cases. First, recall that by assumption, for every $\beta\in \Gamma(d),$ the number of $\gamma\in \Gamma$ for which $\gamma(d)=\beta$ is even. If for every $j\leq d$ and each $\alpha\in \Gamma(j-1)$ the number of $\gamma\in \Gamma$ passing through $\alpha$ is even, then we may repeat the preceding argument to directly obtain \eqref{E:gen-path-moments}. Otherwise, we apply this argument until we reach $\alpha\in \Gamma(j-1),\,\beta\in \Gamma(j)$ so that the number $\abs{\Gamma_{\alpha, \beta}(j)}$ of paths in $\Gamma$ that pass through $\alpha$ and $\beta$ is odd. In this case, the right hand side of \eqref{E:inductive-form} vanishes since the measure $\mu^{(d)}$ is symmetric around $0$ and thus has vanishing odd moments. Relation \eqref{E:gen-path-moments} therefore again holds since in this case both sides are $0.$ This completes the proof of Theorem \ref{T:path-moments}. \hfill $\square$

\section{Proof of Theorem \ref{T:slope}}\label{S:slope-pf} In this section, we use Theorem \ref{T:path-moments} to prove Theorem \ref{T:slope}. Let us first check \eqref{E:Z2}. According to Theorem \ref{T:path-moments}, we have
\[\E{Z_{p,q}^2}=\sum_{\substack{\Gamma=\lr{\gamma_1,\gamma_2}\\ \text{paths from p to q}}}\prod_{j=1}^d\lr{\frac{1}{2}}^{\abs{\Gamma(j)}} \prod_{\substack{\alpha\in \Gamma(j-1)\\\beta \in \Gamma(j)}}\mu_{\abs{\Gamma_{\alpha,\beta}(d)}}^{(j)}.\]
Note that since $\mu$ is symmetric around $0,$ we have that $\mu_1=0.$ Thus, the terms where $\gamma_1\neq \gamma_2$ vanish. Using $\mu_2^{(j)}=\frac{2}{n_{j-1}}$, we find
\begin{align*}
\E{Z_{p,q}^2}&=\sum_{\substack{\text{paths }\gamma\\ \text{from p to q}}}\prod_{j=1}^d\frac{1}{2} \cdot \frac{2}{n_{j-1}}=\frac{1}{n_0},
\end{align*}
as claimed. We now turn to proving \eqref{E:Z4}. Using Theorem \ref{T:path-moments}, we have
\begin{align*}
\E{Z_{p,q}^4}&=\sum_{\substack{\Gamma=\lr{\gamma_k}_{k=1}^4\\ \text{paths from p to q}}}\prod_{j=1}^d\lr{\frac{1}{2}}^{\abs{\Gamma(j)}} \prod_{\substack{\beta\in \Gamma(j)\\\alpha \in \Gamma(j-1)}}\mu_{\abs{\Gamma_{\alpha,\beta}(j)}}^{(j)}\\
&=\sum_{\substack{\Gamma=\lr{\gamma_k}_{k=1}^4\\ \text{paths from p to q}\\\abs{\Gamma_{\alpha,\beta}(j)}\text{ even }\forall \alpha,\beta}} \prod_{j=1}^d \lr{\frac{\mu_4^{(j)}}{2}{\bf 1}_{\left\{\substack{\abs{\Gamma(j-1)}=1\\\abs{\Gamma(j)}=1}\right\}}+\frac{\lr{\mu_2^{(j)}}^2}{2}{\bf 1}_{\left\{\substack{\abs{\Gamma(j-1)}=2\\ \abs{\Gamma(j)}=1}\right\}}+\frac{\lr{\mu_2^{(j)}}^2}{4}{\bf 1}_{\left\{\abs{\Gamma(j)}=2\right\}}} ,
\end{align*}
where we have used that $\mu_1^{(j)}=\mu_3^{(j)}=0$. Fix $\bar{\Gamma}=\lr{\gamma_k}_{k=1}^4$. Note that $\bar{\Gamma}$ gives a non-zero contribution to $\E{Z_{p,q}^4}$ only if 
\[\abs{\bar{\Gamma}_{\alpha,\beta}(j)}\text{ is even},\qquad \forall j,\alpha, \beta.\]
For each such $\bar{\Gamma}$, we have $\abs{\bar{\Gamma(j)}}\in\set{1,2}$ for every $j$. Hence, for every $\bar{\Gamma}$ that contributes a non-zero term in the expression above for $\E{Z_{p,q}^4}$, we may find a collection of two paths $\Gamma=\lr{\gamma_1,\gamma_2}$ from p to q such that 
\[\bar{\Gamma}(j)=\Gamma(j),\qquad \abs{\bar{\Gamma}_{\alpha,\beta}(j)}=2\abs{\Gamma_{\alpha,\beta}(j)},\quad \forall j,\alpha, \beta.\]
We can thus write $\E{Z_{p,q}^4}$ as
\begin{align}\label{E:Z4-2path}
\sum_{\substack{\Gamma=\lr{\gamma_1,\gamma_2}\\ \text{paths from p to q}}} A(\Gamma)\prod_{j=1}^d \lr{\frac{\mu_4^{(j)}}{2}{\bf 1}_{\left\{\substack{\abs{\Gamma(j-1)}=1\\\abs{\Gamma(j)}=1}\right\}}+\frac{\lr{\mu_2^{(j)}}^2}{2}{\bf 1}_{\left\{\substack{\abs{\Gamma(j-1)}=2\\ \abs{\Gamma(j)}=1}\right\}}+\frac{\lr{\mu_2^{(j)}}^2}{4}{\bf 1}_{\left\{\abs{\Gamma(j)}=2\right\}}},
\end{align}
where we introduced
\begin{equation}\label{E:A-def}
A(\Gamma):=\frac{\# \left\{\bar{\Gamma}=\lr{\bar{\gamma}_k}_{k=1}^4,\,\, \bar{\gamma}_k\text{ path from p to q}~\big|~\substack{\forall j,\alpha, \beta,\,\,\,\Gamma(j)=\bar{\Gamma}(j)\\ 2\abs{\Gamma_{\alpha, \beta}(j)}=\abs{\bar{\Gamma}_{\alpha, \beta}(j)}\}}\right\}}{\# \left\{\bar{\Gamma}=\lr{\bar{\gamma}_k}_{k=1}^2,\,\, \bar{\gamma}_k\text{ path from p to q}~\big|~\substack{\forall j,\alpha, \beta,\,\,\,\Gamma(j)=\bar{\Gamma}(j)\\ \abs{\Gamma_{\alpha, \beta}(j)}=\abs{\bar{\Gamma}_{\alpha, \beta}(j)}\}}\right\}},
\end{equation}
which we now evaluate.
\begin{lemma}\label{L:A-formula}
  For each $\Gamma=\lr{\gamma_k}_{k=1}^2$ with $\gamma_k$ paths from p to q, we have
  \begin{equation}\label{E:A-formula}
A(\Gamma)= 3^{\#\setst{j}{\abs{\Gamma(j-1)}=1,\,\abs{\Gamma(j)}=2}}=3^{\#\setst{j}{\abs{\Gamma(j-1)}=2,\,\abs{\Gamma(j)}=1}}.
\end{equation}
\end{lemma}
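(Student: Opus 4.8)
The plan is to read off the structure of a pair $\Gamma=(\gamma_1,\gamma_2)$ from its \emph{width profile} $j\mapsto|\Gamma(j)|\in\{1,2\}$, which satisfies $|\Gamma(0)|=|\Gamma(d)|=1$. First I would decompose the layers into the maximal runs of this profile and let $s$ be the number of maximal blocks of consecutive layers on which $|\Gamma(j)|=2$. Since the profile begins and ends at $1$, a telescoping count (each $+1$ step is matched by a later $-1$ step, and each $2$-block is flanked by exactly one of each) gives $s=\#\{j:|\Gamma(j-1)|=1,\ |\Gamma(j)|=2\}=\#\{j:|\Gamma(j-1)|=2,\ |\Gamma(j)|=1\}$; this is already the second equality in \eqref{E:A-formula}, and it tells us we must show the numerator and denominator of $A(\Gamma)$ equal $6^{s}$ and $2^{s}$ respectively.

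The key structural observation is that on a maximal block $[j_0,j_1]$ where $|\Gamma(j)|=2$, the two paths occupy two distinct neurons at every layer of the block and hence trace vertex-disjoint strands $A$ and $B$; moreover, each $\Gamma$-edge inside the block has multiplicity one, and out of any strand-$A$ neuron the only edge carrying positive $\Gamma$-multiplicity goes to the next strand-$A$ neuron (similarly for $B$). Consequently, a tuple $\bar\Gamma$ with $\bar\Gamma(j)=\Gamma(j)$ for all $j$ and the prescribed edge multiplicities is determined by a single choice per $2$-block: at layer $j_0$ the $\bar\Gamma$-paths (all of which sit on the unique neuron of $\Gamma(j_0-1)$) split among the two edges leaving that neuron, and the no-crossing property then forces each path to follow the strand it entered, through the interior $2\to 2$ transitions and the terminal merge at layer $j_1+1$. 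On a $1$-block there is no freedom — all paths sit on the single neuron of $\Gamma(j)$ and traverse the single available edge — and no information is carried between blocks, since all paths reconverge at each merge.

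It follows that both counts factor over the $s$ maximal $2$-blocks. In the numerator ($2K=4$ paths, every edge multiplicity doubled), each block must route exactly $2$ of the $4$ ordered paths into strand $A$ and the other $2$ into strand $B$, giving $\binom{4}{2}=6$ choices. In the denominator ($2$ paths, multiplicities equal to those of $\Gamma$), each block routes one path per strand, giving $\binom{2}{1}=2$ choices. Hence the numerator equals $6^{s}$, the denominator equals $2^{s}$, and $A(\Gamma)=6^{s}/2^{s}=3^{s}$, which is \eqref{E:A-formula}.

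I expect the main obstacle to be the structural claim in the second paragraph: carefully verifying that the shape constraint $\bar\Gamma(j)=\Gamma(j)$ together with the doubled edge multiplicities really does force strand-locking and leaves precisely the independent per-block choices described above — in particular, treating the split layer $j_0$ and the merge layer $j_1+1$ at the ends of each block, and the degenerate case $\gamma_1=\gamma_2$ (for which $s=0$ and one checks directly that $A(\Gamma)=1$). Once that is pinned down, the factorization and the binomial counts are routine.
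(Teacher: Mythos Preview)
Your proposal is correct and rests on the same combinatorial core as the paper's proof: each maximal $2$-block contributes a factor of $6$ to the numerator and $2$ to the denominator (via strand-locking and the counts $\binom{4}{2}$, $\binom{2}{1}$), yielding $A(\Gamma)=3^{s}$. The only difference is presentational: the paper argues by induction on $d$, peeling off the first $2$-block (up to $j_*=\min\{j\ge 1:|\Gamma(j)|=1\}$) and recursing, whereas you decompose into all $2$-blocks at once and read off the product directly.
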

\begin{proof}
We begin by checking the first equality in \eqref{E:A-formula} by induction on $d.$ Fix $\Gamma=\lr{\gamma_1,\gamma_2}.$ When $d=1,$ we have $\abs{\Gamma(0)}=\abs{\Gamma(1)}=1$. Hence $\gamma_1=\gamma_2$ and $A(\Gamma)=1$ since both the numerator and denominator on the right hand side of \eqref{E:A-def} equal $1$. The right hand side of \eqref{E:A-formula} is also $1$ since $\abs{\Gamma(j)}=1$ for every $j.$ This completes the base case. Suppose now that $D\geq 2,$ and we have proved \eqref{E:A-formula} for all $d\leq D-1.$ Let 
\[j_*:=\min\setst{j=1,\ldots, d}{\abs{\Gamma(j)}=1}.\]
If $j_*=1$, then we are done by the inductive hypothesis. Otherwise, there are two choices of
$\bar{\Gamma}=\set{\bar{\gamma}_k}_{k=1}^2$ for which
\[\Gamma(j)=\bar{\Gamma}(j),\qquad \abs{\Gamma_{\alpha, \beta}(j)}=\abs{\bar{\Gamma}_{\alpha, \beta}(j)},\qquad j\leq j_*.\]
These choices correspond to the two permutations of $\set{\gamma_k}_{k=1}^2.$ Similarly, there are $6$ choices of $\bar{\Gamma}=\set{\bar{\gamma}_k}_{k=1}^4$ for which  
\[\Gamma(j)=\bar{\Gamma}(j),\qquad 2\abs{\Gamma_{\alpha, \beta}(j)}=\abs{\bar{\Gamma}_{\alpha, \beta}(j)},\qquad j\leq j_*.\]
The six choices correspond to selecting one of two choices for $\gamma_1(1)$ and three choices of an index $k=2,3,4$ so that $\gamma_k(j)$ coincides with $\gamma_1(j)$ for each $j\leq j_*.$ If $j_*=d$, we are done. Otherwise, we apply the inductive hypothesis to paths from $\Gamma(j_*)$ to $\Gamma(d)$ to complete the proof of the first equality in \eqref{E:A-formula}. The second equality in \eqref{E:A-formula} follows from the observation that since $\abs{\Gamma(0)}=\abs{\Gamma(d)}=1,$ the number of $j\in \set{1,\ldots, d}$ for which $\abs{\Gamma(j-1)}=1,\, \abs{\Gamma(j)}=2$ must equal the number of $j$ for which $\abs{\Gamma(j-1)}=2,\, \abs{\Gamma(j)}=1$.
\end{proof}
\noindent Combining \eqref{E:Z4-2path} with  \eqref{E:A-formula}, we may write $\E{Z_{p,q}^4}$ as
\begin{equation}\label{E:4path-reduced}
\sum_{\substack{\Gamma=\lr{\gamma_1,\gamma_2}\\ \text{paths from p to q}}} \prod_{j=1}^d \bigg[\frac{\mu_4^{(j)}}{2}{\bf 1}_{\left\{\substack{\abs{\Gamma(j-1)}=1\\\abs{\Gamma(j)}=1}\right\}}+\frac{3}{2}\lr{\mu_2^{(j)}}^2{\bf 1}_{\left\{\substack{\abs{\Gamma(j-1)}=2\\ \abs{\Gamma(j)}=1}\right\}}+\frac{\lr{\mu_2^{(j)}}^2}{4}{\bf 1}_{\left\{\abs{\Gamma(j)}=2\right\}}\bigg].
\end{equation}
Observe that since $\mu_2^{(j)}=2/n_{j-1}$, we have
\[\lr{\#\set{\Gamma=\lr{\gamma_k}_{k=1}^2\text{ paths from p to q}}}^{-1}=\prod_{j=1}^{d-1} \frac{1}{n_j^2}= n_0^{2}\cdot \prod_{j=1}^d \lr{\mu_2^{(j)}}^2/4.\]
Hence, 
$$ \E{Z_{p,q}^4}=\frac{1}{n_0^2}\E{X_d\lr{\gamma_1,\gamma_2}},$$
where the expectation on the right hand side is over the uniform measure on paths $(\gamma_1,\gamma_2)$ from the input of $\mathcal N$ to the output conditioned on $\gamma_1(0)=\gamma_2(0)=p$ and $\gamma_1(d)=\gamma_2(d)=q,$ and 
\[X_d(\gamma_1,\gamma_2):=\prod_{j=1}^d\lr{2\twiddle{\mu}_4\cdot {\bf 1}_{\left\{\substack{\abs{\Gamma(j-1)}=1\\\abs{\Gamma(j)}=1}\right\}} + 6\cdot {\bf 1}_{\left\{\substack{\abs{\Gamma(j-1)}=2\\ \abs{\Gamma(j)}=1}\right\}} + {\bf 1}_{\left\{\abs{\Gamma(j)}=2\right\}}},~~ \Gamma = \lr{\gamma_1,\gamma_2}.\]
We now obtain the upper and lower bounds in \eqref{E:Z4} on $\E{Z_{p,q}^4}$ in similar ways. In both cases, we use the observation that the number of $\Gamma=\lr{\gamma_1,\gamma_2}$ for which $\abs{\Gamma(j)}=1$ for exactly $k$ values of $1\leq j\leq d-1$ is
\[\prod_{j=1}^{d-1} n_j \cdot \sum_{\substack{I\subseteq \set{1,\ldots, d-1}\\ \abs{I}=d-1-k}}\prod_{j\in I}(n_j-1).\]
The value of $X_d$ corresponding to every such path is at least $2^{k+1}$ since $\twiddle{\mu}_4^{(j)}\geq 1$ for every $j$ and is at most $6\twiddle{\mu}_{4,max}$ for the same reason. Therefore, using that for all $\ep\in [0,1],$ we have 
\[\log(1+\ep)\geq \frac{\ep}{2},\]
we obtain
\begin{align*}
\E{X_d}&~~\geq ~~\frac{1}{\prod_{j=1}^{d-1} n_j}~~ \sum_{k=0}^{d-1} ~2^{k+1} \sum_{\substack{I\subseteq \set{1,\ldots, d-1}\\ \abs{I}=d-1-k}}\prod_{j\in I}(n_j-1)\\
&= 2\sum_{k=0}^{d-1} ~~\sum_{\substack{I\subseteq \set{1,\ldots d-1}\\ \abs{I}=d-1-k}}~ \prod_{j\not \in I}\lr{\frac{2}{n_j}}\prod_{j\in I} \lr{1+\frac{1}{n_j}}\\
&  = 2\prod_{j=1}^{d-1}\lr{1 + \frac{1}{n_j}}~~\geq~~ 2\exp\lr{\frac{1}{2}\sum_{j=1}^{d-1} \frac{1}{n_j}}.
\end{align*}
This completes the proof of the lower bound. The upper bound is obtained in the same way:
\begin{align}
\notag \E{X_d}&~~\leq ~~\frac{1}{\prod_{j=1}^{d-1} n_j}~~ \sum_{k=0}^{d-1} ~\lr{6\twiddle{\mu}_{4,max}}^{k+1} \sum_{\substack{I\subseteq \set{1,\ldots, d-1}\\ \abs{I}=d-1-k}}\prod_{j\in I}(n_j-1)\\
\label{E:upper-bound}&  = 6\twiddle{\mu}_{4,max}\prod_{j=1}^{d-1}\lr{1 + \frac{6\twiddle{\mu}_{4,max}}{n_j}}~~\leq~~ 6\twiddle{\mu}_{4,max}\exp\lr{6\twiddle{\mu}_{4,max}\sum_{j=1}^{d-1} \frac{1}{n_j}}.
\end{align}
The upper bounds for $\E{Z_{p,q}^{2K}}$ for $K\geq 3$ are obtained in essentially the same way. Namely, we return to the expression for $\E{Z_{p,q}^{2K}}$ provided by Theorem \ref{T:path-moments}:
\[\E{Z_{p,q}^{2K}}=\sum_{\substack{\Gamma = \set{\gamma_k}_{k=1}^{2K}\\ \gamma_k \text{ paths from p to q}}} \prod_{j=1}^d \lr{\frac{1}{2}}^{\abs{\Gamma(j)}}\prod_{\substack{\beta\in \Gamma(j)\\ \alpha \in \Gamma(j-1)}} \mu_{\abs{\Gamma_{\alpha, \beta}(j)}}^{(j)}.\]
As with the second and fourth moment computations, we note that $\mu_{\abs{\Gamma_{\alpha, \beta}(j)}}$ vanishes unless each $\abs{\Gamma_{\alpha, \beta}(j)}$ is even. Hence, as with \eqref{E:4path-reduced}, we may write
\begin{equation}\label{E:Kpath-reduced}
\E{Z_{p,q}^{2K}}=\sum_{\substack{\Gamma = \set{\gamma_k}_{k=1}^{K}\\ \gamma_k \text{ paths from p to q}}} A_K(\Gamma)\prod_{j=1}^d \lr{\frac{1}{2}}^{\abs{\Gamma(j)}}\prod_{\substack{\beta\in \Gamma(j)\\ \alpha \in \Gamma(j-1)}} \mu_{2\abs{\Gamma_{\alpha, \beta}(j)}}^{(j)},
\end{equation}
where $A_K(\Gamma)$ is the analog of $A(\Gamma)$ from \eqref{E:A-def}. The same argument as in Lemma \ref{L:A-formula} shows that 
\[A_K(\Gamma)\leq \lr{\frac{(2K)!}{K!}}^{\#\setst{1\leq j\leq d}{\abs{\Gamma(j)}<K}}.\]
Combining this with
\[\lr{\#\set{\Gamma=\lr{\gamma_k}_{k=1}^K\text{ paths from p to q}}}^{-1}=\prod_{j=1}^{d-1} \frac{1}{n_j^K}= n_0^{K}\cdot \prod_{j=1}^d \lr{\mu_2^{(K)}}^2/2^K,\]
which is precisely the weight in \eqref{E:Kpath-reduced} assigned to collections $\Gamma$ with $\abs{\Gamma(j)}=K$ for every $1\leq j \leq d-1,$ yields
\[\E{Z_{p,q}^{2K}}\leq \frac{1}{n_0^K}\E{X_d\lr{\gamma_1,\ldots, \gamma_K}~|~ \gamma_k(0)=p,\,\, \gamma_k(d)=q},\]
where the expectation is over uniformly chosen collections $\Gamma=\lr{\gamma_1,\ldots, \gamma_K}$ of paths from the input to the output of $\mathcal N$ and 
\[X_d(\Gamma)=\prod_{j=1}^d 2^{K-\abs{\Gamma(j)}} \frac{(2K)!}{K!}\prod_{\substack{\alpha \in \Gamma(j-1)\\\beta\in \Gamma(j)\\\abs{\Gamma(j)}<K}} \mu_{2\abs{\Gamma_{\alpha, \beta}(j)}}^{(j)}.\]
To complete the proof of the upper bound for $\E{Z_{p,q}^{2K}}$ we now proceed just as the upper bound for the $4^{th}$ moment computation. That is, given $K< \min\set{n_j}$, the number of collections of paths $\Gamma=\lr{\gamma_k}_{k=1}^{K}$ which $\abs{\Gamma(j)}<K$ for exactly $m$ values of $j$ is bounded above by
\[\prod_{j=1}^{d-1}n_j^{K-1} \sum_{\substack{I\subseteq \set{1,\ldots, d-1}\\ \abs{I}=d-1-m}}\prod_{j\in I}\lr{n_j-K}.\]
The value of $X_d$ on each such collection is at most $\lr{C_K}^m$, where $C_K=2^{K-1} \frac{(2K)!}{K!}$ is a large but fixed constant. Hence, just as in \eqref{E:upper-bound},
\[\E{X_d(\Gamma)} \leq C_K \exp\lr{C_K \sum_{j=1}^{d-1}\frac{1}{n_j}}.\]
This completes the proof of Theorem \ref{T:slope}. \hfill $\square$

\section{Proof of Theorem \ref{T:quenched}}\label{S:quenched-pf}
%The upper bound in \eqref{E:quenched-est} follows immediately from \eqref{E:Z4}, so we explain only how to obtain the 
\noindent We have
\begin{equation}\label{E:var-diag}
\widehat{\Var[Z^2]}=\frac{1}{M}\lr{1-\frac{1}{M}}\sum_{m=1}^MZ_{p_m,q_m}^4 - \frac{1}{M^2}\sum_{m_1\neq m_2} Z_{p_{m_1},q_{m_1}}^2 Z_{p_{m_2},q_{m_2}}^2. 
\end{equation}
Fixing $p,q$ and using that the second sum in the previous line has $M(M-1)$ terms, we have
\begin{align*}
-  \frac{1}{M^2}\sum_{m_1\neq m_2} Z_{p_{m_1},q_{m_1}}^2 Z_{p_{m_2},q_{m_2}}^2 & =  \frac{1}{M^2} \sum_{m_1\neq m_2}\lr{Z_{p,q}^4- Z_{p_{m_1},q_{m_1}}^2 Z_{p_{m_2},q_{m_2}}^2} + \lr{1-\frac{1}{M}} Z_{p,q}^4. 
\end{align*}
Hence, using that $\E{Z_{p,q}^4}$ is independent of the particular values of $p,q$, we fix some $p,q$ and write 
\begin{align}
\label{E:reduced-empvar}\E{\widehat{\Var[Z^2]}}&=\frac{1}{M^2}\sum_{m_1\neq m_2} \E{Z_{p,q}^4} -\E{ Z_{p_{m_1},q_{m_1}}^2 Z_{p_{m_2},q_{m_2}}^2}. 
\end{align}
To estimate the difference in this sum, we use Theorem \ref{T:path-moments} to obtain
\begin{align}
\label{E:4th-power}  \E{Z_{p,q}^4} & = \sum_{\substack{\Gamma=\lr{\gamma_k}_{k=1}^4\\ \gamma_k:p\gives q}} \prod_{j=1}^d \lr{\frac{1}{2}}^{\abs{\Gamma(j)}}\prod_{\alpha,\beta} \mu_{\abs{\Gamma_{\alpha, \beta}(j)}}^{(j)}= \sum_{\substack{\Gamma=\lr{\gamma_k}_{k=1}^4\\ \gamma_k:p\gives q}} \prod_{j=1}^d C_j(\Gamma)\\
\label{E:2nd-power2}\E{Z_{p_1,q_1}^2 Z_{p_2,q_2}^2}&=\sum_{\substack{\bar{\Gamma}=\lr{\gamma_k}_{k=1}^4\\ \gamma_1,\gamma_2:p_1\gives q_1\\ \gamma_3,\gamma_4:p_2\gives q_2}} \prod_{j=1}^d \lr{\frac{1}{2}}^{\abs{\bar{\Gamma}(j)}}\prod_{\alpha,\beta} \mu_{\abs{\bar{\Gamma}_{\alpha, \beta}(j)}}^{(j)}=\sum_{\substack{\bar{\Gamma}=\lr{\gamma_k}_{k=1}^4\\ \gamma_1,\gamma_2:p_1\gives q_1\\ \gamma_3,\gamma_4:p_2\gives q_2}} \prod_{j=1}^d C_j(\bar{\Gamma}).
\end{align}
Note that since the measures $\mu^{(j)}$ of the weights are symmetric around zero, their odd moments vanish and hence the only non-zero terms in \eqref{E:4th-power} and \eqref{E:2nd-power2} are those for which 
\[\abs{\Gamma(j)},\, \abs{\bar{\Gamma}(j)}\in \set{1,2},\quad \abs{\Gamma_{\alpha, \beta}(j)},\, \abs{\bar{\Gamma}_{\alpha,\beta}(j)}\in \set{2,4},\qquad \forall j,\alpha, \beta.\]
Further, observe that each path $\gamma$ from some fixed input neuron to some fixed output vertex is determined uniquely by the sequence of hidden neurons $\gamma(j)\in \set{1,\ldots, n_j}$ through which it passes for $j=1,\ldots, d-1$. Therefore, we may identify each collection of paths $\Gamma=\lr{\gamma_k}_{k=1}^4$ in the sum \eqref{E:4th-power} with a unique collection of paths $\bar{\Gamma}=\lr{\bar{\gamma}_k}_{k=1}^4$ in \eqref{E:2nd-power2} by asking that $\gamma_k(j)=\bar{\gamma}_k(j)$ for each $k$ and all $1\leq j \leq d-1$. Observe further that under this bijection, 
\begin{equation}
j\neq 1,d \quad \Rightarrow \quad C_j(\Gamma)=C_j(\bar{\Gamma}).\label{E:Cjequal}
\end{equation}
For $j=1,d$, the terms $C_j(\Gamma)$ and $C_j(\bar{\Gamma})$ are related as follows:
\begin{align}
\label{E:C1} C_1(\Gamma) &= C_1(\bar{\Gamma})\lr{{\bf 1}_{\set{\abs{\Gamma(1)}=2}} + \twiddle{\mu}_4^{(1)} \cdot {\bf 1}_{\set{\abs{\Gamma(1)}=1}}}\\
\label{E:Cd} C_d(\Gamma) &= C_d(\bar{\Gamma})\lr{{\bf 1}_{\set{\abs{\bar{\Gamma}(d)}=1}}+ 2\cdot {\bf 1}_{\left\{\substack{\abs{\bar{\Gamma}(d)}=2\\ \abs{\Gamma(d-1)}=2} \right\} }  + 2\twiddle{\mu}_4^{(d)} \cdot {\bf 1}_{\left\{\substack{\abs{\bar{\Gamma}(d)}=2\\ \abs{\Gamma(d-1)}=1} \right\} } } .
\end{align}
We consider two cases: (i) $q_{m_1}\neq q_{m_2}$ (i.e. $\abs{\bar{\Gamma}(d)}=2$) and (ii) $q_{m_1}=q_{m_2}$ (i.e. $\abs{\bar{\Gamma}(d)}=1$ and $p_{m_1}\neq p_{m_2}$). In case (i), we have
\[
 C_1(\Gamma) = C_1(\bar{\Gamma})\lr{{\bf 1}_{\set{\abs{\Gamma(1)}=2}} + \twiddle{\mu}_4^{(1)} {\bf 1}_{\set{\abs{\Gamma(1)}=1}}}\geq C_1(\bar{\Gamma})\qquad \text{and}\qquad C_d(\Gamma) \geq 2 C_d(\bar{\Gamma}).\]
Hence, using \eqref{E:C1} and \eqref{E:Cd}, we find that in case (i)
\[q_{m_1}\neq q_{m_2}\qquad \Rightarrow\qquad \E{Z_{p_m,q_m}^4} \geq 2 \E{Z_{p_{m_1},q_{m_1}}^2 Z_{p_{m_2},q_{m_2}}^2}.\]
In case (i) we therefore find
\begin{equation}
  \label{E:case1-diff-est}
  \E{Z_{p_m,q_m}^4} -\E{Z_{p_{m_1},q_{m_1}}^2} \geq \E{Z_{p_{m_1},q_{m_1}}^2} \geq \frac{1}{n_0^2}\exp\lr{\frac{1}{2}\sum_{j=1}^{d-1}\frac{1}{n_j}},
\end{equation}
where the last estimate is proved by the same argument as the relation \eqref{E:Z4} in Theorem \ref{T:slope}.  To obtain the analogous lower bound for case (ii), we write $q=q_{m_1}= q_{m_2},\, p_{m_1}\neq p_{m_2}.$ In this case, combining \eqref{E:Cjequal} with \eqref{E:Cd}, we have
\[C_j(\Gamma)=C_j(\bar{\Gamma})\quad j=2,\ldots, d.\]
Moreover, continuing to use the bijection between $\Gamma$ and $\bar{\Gamma}$ above, \eqref{E:C1} yields in this case
\[C_1(\bar{\Gamma})=
\begin{cases}
\frac{1}{\twiddle{\mu}_4^{(1)}} C_1(\Gamma) &\quad, \qquad\text{ if } \abs{\Gamma(1)}=1\\
0&\quad, \qquad\text{ if }\abs{\Gamma(1)}=2 
\end{cases}.
\]
Hence, $\E{Z_{p,q}^4}-\E{Z_{p_1,q}^2Z_{p_2,q}^2}$ becomes
\begin{align*}
 \sum_{\substack{\Gamma=\lr{\gamma_k}_{k=1}^4\\ \gamma_k:p\gives q}} \lr{C_1(\Gamma)-C_1(\bar{\Gamma})}\prod_{j=2}^d C_j(\Gamma)~~=~~ \lr{1-\frac{1}{\twiddle{\mu}_4^{(1)}}} \sum_{\substack{\Gamma=\lr{\gamma_k}_{k=1}^4\\ \gamma_k:p\gives q\\\abs{\Gamma(1)}=1}} \prod_{j=1}^d C_j(\Gamma).
\end{align*}
Using that if $\abs{\Gamma(0)}=\abs{\Gamma(1)}=1,$ then 
\[C_1(\Gamma)= \frac{\mu_4^{(1)}}{2}= \frac{2 \twiddle{\mu}_4^{(1)}}{n_0^2},\]
we find
\begin{equation}
\E{Z_{p,q}^4}-\E{Z_{p_1,q}^2Z_{p_2,q}^2}= \frac{2}{n_0^2}\lr{\twiddle{\mu}_4^{(1)}-1} \sum_{\substack{\Gamma=\lr{\gamma_k}_{k=1}^4\\ \gamma_k:p\gives q\\\abs{\Gamma(1)}=1}} \prod_{j=2}^d C_j(\Gamma).\label{E:case2-diff}
\end{equation}
Writing $\widehat{p}$ for any neuron in the first hidden layer of $\mathcal N$, we rewrite the sum in the previous line as
\[\sum_{\substack{\Gamma=\lr{\gamma_k}_{k=1}^4\\ \gamma_k:p\gives q\\\abs{\Gamma(1)}=1}} \prod_{j=2}^d C_j(\Gamma)= n_1 \sum_{\substack{\Gamma=\lr{\gamma_k}_{k=1}^4\\ \gamma_k:\widehat{p}\gives q}} \prod_{j=2}^d C_j(\Gamma)= n_1 \E{Z_{\widehat{p},q}^4},\]
where the point is now that we are considering paths only from $\widehat{p}$ to $q$.  According to \eqref{E:Z4} from Theorem \ref{T:slope}, we have
\[\E{Z_{\widehat{p},q}^4}\geq \frac{2}{n_1^2}\exp\lr{\frac{1}{2} \sum_{j=2}^{d-1}\frac{1}{n_j}}.\]
Combining this with \eqref{E:case2-diff} yields
\[\E{Z_{p,q}^4}-\E{Z_{p_1,q}^2Z_{p_2,q}^2}\geq \frac{4}{n_0^2n_1}\lr{\twiddle{\mu}_4^{(1)}-1}\exp\lr{\frac{1}{2}\sum_{j=2}^{d-1}\frac{1}{n_j}}.\]
Combining this with \eqref{E:reduced-empvar}, \eqref{E:case1-diff-est} and setting
\[\eta:=\frac{\#\setst{m_1\neq m_2}{q_{m_1}=q_{m_2}}}{M(M-1)}=\frac{(n_0-1)n_0n_d}{n_0n_d(n_0n_d-1)}=\frac{n_0-1}{n_0n_d-1},\]
we obtain
\begin{align*}
  \E{\widehat{\Var}[Z^2]} & \geq \frac{1}{n_0^2}\lr{1-\frac{1}{M}}\lr{ \eta  + \frac{4\lr{1-\eta}}{n_1}\lr{\twiddle{\mu}_4^{(1)}-1}e^{-\frac{1}{n_1}}  } \exp\lr{\frac{1}{2}\sum_{j=1}^{d-1}\frac{1}{n_j}},
\end{align*}
proving \eqref{E:quenched-est-lb}. Finally, the upper bound in \eqref{E:quenched-est-ub} follows from dropping the negative term in \eqref{E:reduced-empvar} and applying the upper bound from \eqref{E:Z4}.\hfill $\square$

%that among all collections $\Gamma$ in the sum \eqref{E:4th-power} that give a non-zero contribution, the proportion for which $\abs{\Gamma(1)}=1$ is precisely $1/n_1.$ Hence, using \eqref{E:C1} and \eqref{E:Cd}, we have that for case (ii)
%\[q_{m_1}= q_{m_2}\qquad \Rightarrow\qquad \E{Z_{p_m,q_m}^4} =\lr{1 +\frac{2\twiddle{\mu}_4^{(1)}-1}{n_1}} \E{Z_{p_{m_1},q_{m_1}}^2 Z_{p_{m_2},q_{m_2}}^2}.\]
%Putting this all together, we obtain
%\[\E{\widehat{\Var}[Z^2]}\geq \lr{1-\frac{1}{M}}\lr{\frac{\lr{2\twiddle{\mu}_4^{(1)}-1}\eta}{n_1}  + \frac{1-\eta}{2}}\E{Z_{p,q}^4},\]
%where
%\[\eta = \frac{\#\set{m_1\neq m_2~|~ q_{m_1}\neq q_{m_2}}}{M(M-1)}.\]

\end{document}